\newcommand{\etal}{\textit{et al.}}
\newtheorem{mythm}{Theorem}
\newtheorem{myprop}[mythm]{Proposition}
\newtheorem{definition}[mythm]{Definition}
\newtheorem{lemma}[mythm]{Lemma}
\newtheorem{remark}{Remark}
\DeclareMathOperator*{\argmin}{arg\,min}
\DeclareMathOperator*{\argmax}{arg\,max}
\DeclareMathOperator{\sign}{sign}
\author{Kevin Bui \thanks{Department of Mathematics;
University of California, Irvine;
Irvine, CA 92697, United States; \url{kevinb3@uci.edu}} \and Yifei Lou \thanks{Department of Mathematical Sciences;
University of Texas, Dallas; Richardson, TX 75080, United States; \url{yifei.lou@utdallas.edu} } \and Fredrick Park \thanks{Department of Mathematics \& Computer Science;
Whittier College; Whittier, CA 90602, United States; \url{fpark@whittier.edu}} \and Jack Xin \thanks{Department of Mathematics;
University of California, Irvine;
Irvine, CA 92697, United States; \url{jxin@math.uci.edu}}}
\begin{document}
\onecolumn

\title{Difference of Anisotropic and Isotropic TV for Segmentation under Blur and Poisson Noise}

\maketitle

\begin{abstract}
In this paper, we aim to segment an image degraded by blur and Poisson noise. We adopt a smoothing-and-thresholding (SaT) segmentation framework that finds a piecewise-smooth solution, followed by $k$-means clustering to segment the image. Specifically for the image smoothing step, we replace the least-squares fidelity for Gaussian noise in the Mumford-Shah model with a maximum posterior (MAP) term to deal with Poisson noise and we incorporate the weighted difference of anisotropic and isotropic total variation (AITV) as a regularization to promote the sparsity of image gradients. For such a nonconvex model, we develop a specific splitting scheme and utilize a proximal operator to apply the alternating direction method of multipliers (ADMM). Convergence analysis is provided to validate the efficacy of the ADMM scheme.   Numerical experiments on various segmentation scenarios (grayscale/color and multiphase) showcase that our proposed method outperforms a number of segmentation methods, including the original SaT. 

\end{abstract}

\section{Introduction}
Image segmentation partitions an image into multiple, coherent regions, where pixels of one region share similar characteristics such as colors, textures, and edges. It remains an important yet challenging  problem in computer vision that has various applications, including magnetic resonance imaging \mbox{\cite{duan2015l_, li2022image, tongbram2021novel}} and microscopy \cite{bui2020segmentation, zosso2017image}.  One of the most fundamental models for image segmentation is the Mumford-Shah model \cite{mumford1989optimal} because of its robustness to noise.  Given an input image $f: \Omega \rightarrow \mathbb{R}$ defined on an open, bounded, and connected domain $\Omega \subset \mathbb{R}^2$, the Mumford-Shah model is formulated as
\begin{align} \label{eq:MS_model}
    \min_{u, \Gamma} E_{MS}(u, \Gamma) \coloneqq &\frac{\lambda}{2} \int_{\Omega} (f-u)^2 \;dx + \frac{\mu}{2} \int_{\Omega \setminus \Gamma} |\nabla u|^2 \;dx + \text{Length}(\Gamma), 
\end{align}
where $u: \Omega \rightarrow \mathbb{R}$ is a piecewise-smooth approximation of the image $f$, $\Gamma \subset \Omega$ is a compact curve representing the region boundaries, and $\lambda, \mu >0$ are the weight parameters. The first term in \eqref{eq:MS_model} is the fidelity term that ensures that the solution $u$ approximates the image $f$. The second term enforces $u$ to be piecewise smooth on $\Omega \setminus \Gamma$. The last term measures the perimeter, or more mathematically the one-dimensional Haussdorf measure in $\mathbb{R}^2 
$  \cite{bar2011mumford}, of the curve $\Gamma$. However, \eqref{eq:MS_model} is difficult to solve because the unknown set of boundaries needs to be discretized.  One common approach  involves  approximating the objective function in \eqref{eq:MS_model} by a sequence of elliptic functionals \cite{ambrosio1990approximation}. 

Alternatively, Chan and Vese (CV) \cite{chan-vese-2001} simplified \eqref{eq:MS_model} by assuming the solution $u$ to be piecewise constant with two phases or regions, thereby making the model easier to solve via the level-set method \cite{osher1988fronts}. Let the level-set function $\phi$ be Lipschitz continuous and be defined as follows:
\begin{align*}
    \begin{cases}
    \phi(x) > 0 &\text{ if } x \text{ is inside } \Gamma, \\
    \phi(x) = 0 &\text{ if } x \text{ is at } \Gamma, \\
    \phi(x) <0 & \text{ if } x \text{ is outside } \Gamma.
    \end{cases}
\end{align*}
By the definition of $\phi$, the curve $\Gamma$ is represented by $\phi(x) = 0.$
The image region can be defined as either inside or outside the curve $\Gamma$. In short, the CV model is formulated as
\begin{align} \label{eq:CV}
    \min_{c_1, c_2, \phi} E_{CV}(c_1, c_2, \phi) &\coloneqq \lambda \left( \int_{\Omega} |f-c_1|^2 H(\phi) \;dx+  \int_{\Omega} |f-c_2|^2 (1-H(\phi))\;dx \right)+ \nu \int_{\Omega} |\nabla H(\phi)| \;dx,
\end{align}
where $\lambda, \nu$ are weight parameters, the constants $c_1, c_2$ are the mean intensity values of the two regions, and $H(\phi)$ is the Heaviside function defined by $H(\phi) = 1$ if $\phi \geq 0$ and $H(\phi) = 0$ otherwise. A convex relaxation \cite{chan-esedoglu-nikolova-2004} of \eqref{eq:CV} was formulated as 
\begin{align*}
    \min_{c_1, c_2, u \in [0,1]}   \lambda \left( \int_{\Omega} |f-c_1|^2 u \;dx+  \int_{\Omega} |f-c_2|^2 (1-u)\;dx \right)+ \nu \int_{\Omega} |\nabla u| \;dx, 
\end{align*}
where an image segmentation $\tilde{u}$ is obtained by thresholding $u$, that is
\begin{align*}
    \tilde{u}(x) = \begin{cases}
    1 &\text{ if } u(x) > \tau,\\
    0 &\text{ if } u(x) \leq \tau,
    \end{cases}
\end{align*}
for some value $\tau \in (0,1)$. It 
can be solved efficiently by convex optimization algorithms, such as the alternating direction method of multipliers (ADMM) \cite{boyd2011distributed} and primal-dual hybrid gradient \cite{chambolle2011first}. 
 A multiphase extension of \eqref{eq:CV} was proposed in \cite{vese2002multiphase}, but it requires that the number of regions to be segmented is a power of 2. For segmenting into an arbitrary number of regions, fuzzy membership functions were incorporated \cite{li2010multiphase}.

Cai \etal~\cite{cai2013two} proposed the smoothing-and-thresholding (SaT) framework that is related to the model \eqref{eq:MS_model}. In the smoothing step of SaT, a convex variant of \eqref{eq:MS_model} is formulated as
\begin{align}\label{eq:convex_MS}
   u^*= \arg\min_u \frac{\lambda}{2} \int_{\Omega} (f-Au)^2 \;dx + \frac{\mu}{2} \int_{\Omega} |\nabla u|^2 \;dx + \int_{\Omega} |\nabla u | \;dx,
\end{align}
yielding a piecewise-smooth solution $u^*$. The blurring operator $A$ is included in the case when the image $f$ is blurred. The total variation (TV) term $\int_{\Omega} |\nabla u| \;dx$ is a convex approximation of the length term in \eqref{eq:CV} by the coarea formula \cite{chan-esedoglu-nikolova-2004}. After the smoothing step, a thresholding step is applied to the smooth image $u^*$ to segment it into multiple regions. The two-stage framework has many advantages. First, the smoothing model \eqref{eq:convex_MS} is strongly convex, so it can be solved by any convex optimization algorithm to obtain a unique solution $u^*$. Second,  the user can adjust the number of thresholds to segment $u^*$ and the threshold values to obtain a satisfactory segmentation result, 
thanks to the flexibility of the thresholding step. Furthermore, the SaT framework  can be adapted to color images by incorporating an intermediate lifting step \cite{cai2017three}. Before performing the thresholding step, the lifting step converts the RGB space to Lab (perceived lightness, red-
green and yellow-blue) color space and concatenates both RGB and Lab intensity values into a six-channel image. The multi-stage framework for color image segmentation is called smoothing, lifting, and thresholding (SLaT). 

One limitation of \eqref{eq:convex_MS} lies in the $\ell_2$ fidelity term that is statistically designed for images corrupted by additive Gaussian noise, and as a result, the smoothing step is not applicable to other types of noise distribution. In this paper, we aim at Poisson noise, which is commonly encountered when an image is taken by photon-capturing devices such as in positron emission tomography \cite{vardi1985statistical} and astronomical imaging \cite{lanteri2005restoration}. By using the data fidelity term of $Au - f \log Au$ \cite{le2007variational}, we obtain a smoothing model that is appropriate for Poisson noise \cite{chan2014two}:
\begin{align}\label{eq:Poisson_MS}
        \min_u \lambda \int_{\Omega} (Au - f \log Au )\;dx + \frac{\mu}{2} \int_{\Omega} |\nabla u|^2 \;dx + \int_{\Omega} |\nabla u | \;dx.
\end{align}

As a convex approximation of the length term in \eqref{eq:MS_model}, the TV term in \eqref{eq:Poisson_MS} can be further improved by nonconvex regularizations. The TV regularization is defined by the $\ell_1$ norm of the image gradient. Literature  has shown that nonconvex regularizations often yield better performance than the convex $\ell_1$ norm   in identifying sparse solutions. Examples of nonconvex regularization include $\ell_p, 0 < p < 1,$ \cite{cao2013fast, chartrand2007exact, xu2012l_}, $\ell_1 - \alpha \ell_2, \alpha \in (0,1]$ \cite{ding2019regularization,ge2021dantzig, li2020, lou2015computational,lou-2015-cs}, $\ell_1/\ell_2 $ \cite{rahimi2019scale,wang2020accelerated,xu2021analysis}, and an error function \cite{guo2021novel}. 
Lou \etal~\cite{lou2015weighted} designed a TV version of $\ell_1 - \alpha \ell_2$ called the weighted difference of anisotropic--isotropic total variation (AITV), which outperforms TV in various imaging applications, such as image denoising \cite{lou2015weighted}, image reconstruction \cite{lou2015weighted, li2020}, and image segmentation \cite{bui2022smoothing, bui2021weighted, wu2022image}.

In this paper, we propose an AITV variant of \eqref{eq:Poisson_MS} to improve the smoothing step of the SaT/SLaT framework for images degraded by Poisson noise and/or blur. Incorporating AITV regularization is motivated by our previous works \cite{ bui2022smoothing, bui2021weighted,park2016weighted}, where we demonstrated that AITV regularization  is effective in preserving edges and details, especially under Gaussian and impulsive noise. To maintain similar computational efficiency as the original SaT/SLaT framework, we propose an ADMM algorithm that utilizes the $\ell_1-\alpha \ell_2$ proximal operator \cite{lou2018fast}. The main contributions of this paper are as follows:
\begin{itemize}
    \item We propose an AITV-regularized variant of \eqref{eq:Poisson_MS} and prove the existence of a minimizer for the model.
    \item We develop a computationally efficient ADMM algorithm and provide its convergence analysis under certain conditions.
    \item We conduct numerical experiments on various grayscale/color images to demonstrate the effectiveness of the proposed approach.
\end{itemize}

The rest of the paper is organized as follows. Section \ref{sec:prelim} describes the background information such as notations, Poisson noise, and the SaT/SLaT framework. In Section \ref{sec:proposed_method}, we propose a simplified Mumford-Shah model with AITV and a MAP data fidelity term for Poisson noise. In the same section, we show that the model has a global minimizer and develop an ADMM algorithm with convergence analysis. In Section \ref{sec:experiment}, we evaluate the performance of the AITV Poisson SaT/SLaT framework on various grayscale and color images. Lastly, we conclude the paper in Section \ref{sec:conclude}. 
\section{Preliminaries} \label{sec:prelim}
\subsection{Notation}
Throughout the rest of the paper, we  represent images and mathematical models in discrete notations (i.e., vectors and matrices). An image is represented as an $M \times N$ matrix, and hence the image domain is denoted by $\Omega = \{1, 2, \ldots, M\} \times \{1,2, \ldots, N\}$. We define two inner product spaces: $X \coloneqq \mathbb{R}^{M \times N}$ and $Y \coloneqq X \times X$. Let $u \in X$. For shorthand notation, we define $u \geq 0$ if $u_{i,j} \geq 0$ for all $(i,j) \in \Omega$. The discrete gradient operator $\nabla: X \rightarrow Y$ is defined by $(\nabla u)_{i,j} = \left((\nabla_x u)_{i,j},
    (\nabla_y u)_{i,j} \right)$,
where
	\begin{align*}
	(\nabla_x u)_{i,j} = \begin{cases}
	u_{i,j} - u_{i,j-1} &\text{ if } 2 \leq j \leq N, \\
	u_{i,1} - u_{i,N} &\text{ if } j = 1,
	\end{cases}
\mbox{\ and\ }
	(\nabla_y u)_{i,j} = \begin{cases}
	u_{i,j} - u_{i-1,j} &\text{ if } 2 \leq i \leq M, \\
	u_{1,j} - u_{M,j} &\text{ if } i = 1.
	\end{cases}
	\end{align*}
The space $X$ is equipped with the standard inner product $\langle \cdot, \cdot \rangle_X$ and  Euclidean norm $\|\cdot\|_2$. The space $Y$ has the following inner product and norms: for $p=(p_1, p_2) \in Y$ and $q=(q_1, q_2) \in Y$, 
\begin{align*}
    \langle p, q \rangle_Y &= \langle p_1, q_1 \rangle_X + \langle p_2, q_2 \rangle_X,\\
    \|p\|_1 &= \sum_{i=1}^M \sum_{j=1}^N |(p_1)_{i,j}|+|(p_2)_{i,j}|, \\
    \|p\|_2 & = \sqrt{\sum_{i=1}^M \sum_{j=1}^N|(p_1)_{i,j}|^2+|(p_2)_{i,j}|^2} , \\
    \|p\|_{2,1} &= \sum_{i=1}^M \sum_{j=1}^N \sqrt{(p_1)_{i,j}^2 + (p_2)_{i,j}^2}. 
\end{align*}
For brevity, we omit the subscript $X$ or $Y$ in the inner product when its context is clear.

\subsection{AITV Regularization}
There are two popular discretizations of total variation: the isotropic TV \cite{rudin1992nonlinear} and the anisotropic TV \cite{choksi-2011}, which are defined by 
\begin{align*}
   & \|\nabla u\|_{2,1} = \sum_{i=1}^M \sum_{j=1}^N \sqrt{|(\nabla_x u)_{i,j}|^2 + |(\nabla_y u)_{i,j}|^2}, \\
  &  \|\nabla u\|_1 = \sum_{i=1}^M \sum_{j=1}^N |(\nabla_x u)_{i,j}| + |(\nabla_y u)_{i,j}|,
\end{align*} 
respectively.
This work is based on the weighted difference between anisotropic and isotropic TV (AITV)  regularization~\cite{lou2015weighted}, defined by
\begin{align} 
    \|\nabla u\|_1 - \alpha\|\nabla u\|_{2,1} = \sum_{i=1}^M \sum_{j=1}^N \left(|(\nabla_x u)_{i,j}| + |(\nabla_y u)_{i,j}| -  \alpha \sqrt{|(\nabla_x u)_{i,j}|^2 + |(\nabla_y u)_{i,j}|^2}\right),
\end{align}
for a weighting parameter $\alpha \in [0,1].$ The range of $\alpha$ ensures the non-negativity of the AITV regularization. 
Note that anisotropic TV is defined as the $\ell_1$ norm of the image gradient $((\nabla_x u)_{i,j}, (\nabla_y u)_{i,j})$ at the pixel location $(i,j) \in \Omega$,  while  isotropic TV is the $\ell_2$ norm on the gradient vector. As a result, AITV can be viewed as the $\ell_1 - \alpha \ell_2$ regularization on the gradient vector at every pixel, thereby enforcing sparsity individually at each gradient vector. 

\subsection{Poisson Noise}
Poisson noise follows the Poisson distribution with mean and variance $\eta$, whose probability mass function is given by
\begin{align}\label{eq:poi-dist}
\mathbb{P}_{\eta}(n) = \frac{e^{-\eta} \eta^n}{n!}, \; n \geq 0.
\end{align}
For a clean image $g \in X$, its intensity value at each pixel $g_{i,j}$ serves as the mean and variance for the corresponding noisy observation $f \in X$ defined by
\begin{align*}
    f_{i,j} \sim \text{Poisson}(g_{i,j})\; \forall (i,j) \in \Omega. 
\end{align*}
To recover the image $g$ from the noisy image $f$, we find its maximum a posteriori (MAP) estimation $u$, which maximizes the probability $\mathbb{P}(u|f)$. By Bayes' theorem, we have
\begin{align*}
    \mathbb{P}(u|f) = \frac{\mathbb{P}(f|u)\mathbb{P}(u)}{\mathbb{P}(f)}.
\end{align*}
It further follows from the definition \eqref{eq:poi-dist} that
\begin{align*}
    \mathbb{P}(f_{i,j}|u_{i,j})\mathbb{P}(u_{i,j}) = \mathbb{P}_{u_{i,j}}(f_{i,j}) \mathbb{P}(u_{i,j}) = \frac{e^{-u_{i,j}} u_{i,j}^{f_{i,j}}}{(f_{i,j})!}\mathbb{P}(u_{i,j}) .
\end{align*}
Since Poisson noise is i.i.d. pixelwise, we have
\begin{align*}
    \mathbb{P}(u|f) = \prod_{(i,j)\in \Omega} \mathbb{P}(f_{i,j}|u_{i,j}) \frac{\mathbb{P}(u_{i,j})}{\mathbb{P}(f_{i,j})} = \prod_{(i,j) \in \Omega} \frac{e^{-u_{i,j}} u_{i,j}^{f_{i,j}}}{(f_{i,j})!}\frac{\mathbb{P}(u_{i,j})}{\mathbb{P}(f_{i,j})}.
\end{align*}
The MAP estimate of $\mathbb{P}(u|f)$ is equivalent to its negative logarithm, thus leading to the following optimization problem:
\begin{align}
    \min_{u \geq 0 } \sum_{(i,j) \in \Omega} u_{i,j} - f_{i,j} \log u_{i,j} - \log \mathbb{P}(u_{i,j}). 
\end{align}
The last term $- \log \mathbb{P}(u_{i,j})$ can be regarded as an image prior or a regularization. For example, Le \etal~ \cite{le2007variational} considered the isotropic total variation as the image prior and proposed a Poisson denoising model 
\begin{align} \label{eq:Poisson_denoise}
    \min_{u \geq 0} \langle u - f \log u, \mathbbm{1} \rangle + \|\nabla u\|_{2,1},
\end{align}
where $\log$ is applied pixelwise and $\mathbbm{1}$ is the matrix whose entries are all 1's.  The first term in \eqref{eq:Poisson_denoise} is 
a concise notation that is commonly used as a fidelity term for Poisson denoising in various imaging applications~\cite{chan2014two, chang2018total, chowdhury2020non,rahman2020poisson,le2007variational, wen2016primal}. 

\subsection{Review of Poisson SaT/SLaT}
A Poisson SaT framework \cite{chan2014two} consists of two   steps. Given a noisy grayscale image $f \in X$ corrupted by Poisson noise, the first step is the smoothing step that finds a piecewise-smooth solution $u^*$ from the optimization model:
\begin{align}\label{eq:ms_poisson_discrete}
    u^* = \argmin_{u \geq 0}  \lambda \langle Au - f \log Au, \mathbbm{1} \rangle + \frac{\mu}{2} \|\nabla u\|_2^2 + \|\nabla u\|_{2,1}. 
\end{align}
Then in the thresholding step, $K-1$ threshold values $\tau_1 \leq \tau_2 \leq \ldots \leq \tau_{K-1}$ are appropriately chosen to segment $u^*$ into $K$ regions, where the $k$th region is given by 
\begin{align*}
    \Omega_{k} = \{(i,j) \in \Omega: \tau_{k-1} \leq u^*_{i,j} < \tau_k\},
\end{align*}
with $\tau_0 \coloneqq \inf_{x \in \Omega} u^*(x)$. The thresholding step is typically performed by $k$-means clustering.

The Poisson smoothing, lifting, and thresholding (SLaT) framework \cite{cai2017three} extends the Poisson SaT framework to color images. For a color image $f = (f_1, f_2, f_3) \in X \times X \times X$, the model
\eqref{eq:ms_poisson_discrete} is applied to each color channel $f_i$ for $i=1,2,3$, thus leading to a  smoothed color image $u^* = (u_1^*, u_2^*, u_3^*)$. An additional lifting step \cite{luong1993color} is performed to transform $u^*$ to $(u_1', u_2', u_3')$ in the Lab space (perceived lightness, red-green, and yellow-blue). The channels in Lab space are less correlated than in RGB space, so they may have useful information for segmentation. The RGB image and the Lab image are concatenated to form the multichannel image $\hat{u} \coloneqq (u_1^*, u_2^*, u_3^*, u_1', u_2', u_3')$, followed by  the thresholding stage. 
 Generally, $k$-means clustering yields $K$ centroids $c_1, \ldots, c_K$ as constant vectors, which are used to form the region
\begin{align*}
    \Omega_{k} =
    \left \{(i,j) \in \Omega : \| \hat{u}_{i,j} -c_{k}\|_2 = \min_{1 \leq \kappa \leq K} \|\hat{u}_{i,j} - c_{\kappa}\|_2 \right\}
\end{align*}
for $k=1, \ldots, K$ such that $\Omega_{k}$'s are disjoint and $\bigcup_{k=1}^K \Omega_{k} = \Omega$. 

After the thresholding step for both SaT/SLaT, we define a piecewise-constant approximation of the image $f$ by
\begin{align}\label{eq:pc_constant_approx}
    \tilde{f} = (\tilde{f}_1, \ldots, \tilde{f}_d) \text{ such that } \tilde{f}_{\ell} = \sum_{k=1}^K c_{k,\ell} \mathbbm{1}_{\Omega_{k}}\; \forall \ell=1, \ldots, d,
\end{align}
where $c_{k,\ell}$ is the $\ell$th entry of the constant vector $c_k$ and 
\begin{align*}
\mathbbm{1}_{\Omega_{k}} = \begin{cases}
    1 &\text{ if } (i,j) \in \Omega_{k}, \\
    0 &\text{ if } (i,j) \not \in \Omega_{k}.
    \end{cases} 
\end{align*}
Recall that $d=1$ when $f$ is grayscale, and $d=3$ when $f$ is color.
\section{Proposed Approach}\label{sec:proposed_method}
To improve the Poisson SaT/SLaT framework, we propose to replace the isotropic TV in \eqref{eq:ms_poisson_discrete} with AITV regularization. In other words, in the smoothing step, we obtain the smoothed image $u^*$ from the optimization problem
\begin{align}\label{eq:ms_poisson_aitv}
        u^* = \argmin_{u} F(u) \coloneqq  \lambda \langle Au - f \log Au, \mathbbm{1} \rangle + \frac{\mu}{2} \|\nabla u\|_2^2 + \|\nabla u\|_1 - \alpha \|\nabla u\|_{2,1},
\end{align}
for $\alpha \in [0,1]$. We establish that this model admits a global solution. We then develop an ADMM algorithm to find   a solution and provide the convergence analysis. The overall segmentation approach is described in Algorithm \ref{alg:sat_slat}.

\begin{algorithm}[t]
  \textbf{Input}:{\begin{itemize}
      \item image $f= (f_1, \ldots, f_d)$
      \item blurring operator $A$
      \item fidelity parameter $\lambda >0$
      \item smoothing parameter $\mu \geq 0$
      \item AITV parameter $\alpha \in [0,1]$
      \item the number of regions in the image $K$
  \end{itemize}}
  \textbf{Output}: {Segmentation $\tilde{f}$}\\
   Stage one: Compute $u_{\ell}$ by solving \eqref{eq:ms_poisson_aitv} separately for $\ell = 1, \ldots, d$.\\
   Stage two:
   \If{$f$ is a grayscale image, i.e., $d=1$}{Go to stage three. }
   \ElseIf{$f$ is a color image, i.e., $d=3$}{Transfer the solution $u^*=(u_1^*, u_2^*,u_3^*)$ into Lab space to obtain $(u_1', u_2', u_3')$ and concatenate to form $\hat{u}=(u_1^*, u_2^*,u_3^*, u_1', u_2', u_3')$.}
   Stage three: Apply $k$-means to obtain $\{(c_{k}, \Omega_{k})\}_{k=1}^K$ and compute $\tilde{f}$ by \eqref{eq:pc_constant_approx}.
\caption{AITV Poisson SaT/SLaT}
\label{alg:sat_slat}
\end{algorithm}

\subsection{Model Analysis}
 To establish the solution's existence of the proposed model \eqref{eq:ms_poisson_aitv}, we  start with Lemma~\ref{lemma:poincare}, a discrete version of Poincar\'e's inequality \cite{evans2010partial}. In addition, we prove Lemma~\ref{lemma: some_ineq1} and Proposition~\ref{prop:bound}, leading to the global existence theorem (Theorem~\ref{thm:solution_exist}).

\begin{lemma}\label{lemma:poincare} 
There exists a constant $C>0$ such that 
\begin{align} \label{eq:discrete_poincare}
   \left \|u - \bar{u} \mathbbm{1}\right \|_2 \leq C \|\nabla u\|_{2,1},
\end{align}
for every $u \in X$ and $\bar{u} \coloneqq \displaystyle \frac{1}{MN} \sum_{i=1}^{M} \sum_{j=1}^N u_{i,j}$.
\end{lemma}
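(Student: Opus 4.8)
The plan is to exploit finite-dimensionality: since $X = \mathbb{R}^{M \times N}$ is finite-dimensional, the inequality \eqref{eq:discrete_poincare} is essentially the statement that the seminorm $u \mapsto \|\nabla u\|_{2,1}$ dominates a norm on the quotient of $X$ by the constant images. I would argue by contradiction and compactness. First observe that subtracting the mean does not change the gradient: since $\mathbbm{1}$ is constant, $\nabla(\bar{u}\mathbbm{1}) = 0$, and therefore $\nabla u = \nabla(u - \bar{u}\mathbbm{1})$. Hence it suffices to find $C>0$ such that $\|v\|_2 \le C \|\nabla v\|_{2,1}$ for every mean-zero $v \in X$ (i.e. $\bar v = 0$), and then apply this to $v = u - \bar u \mathbbm{1}$.

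Suppose no such constant exists. Then there is a sequence $\{v^{(n)}\} \subset X$ with $\bar{v}^{(n)} = 0$, $\|v^{(n)}\|_2 = 1$, and $\|\nabla v^{(n)}\|_{2,1} < 1/n$. The set $\{v \in X : \bar v = 0,\ \|v\|_2 = 1\}$ is closed and bounded, hence compact, so after passing to a subsequence we may assume $v^{(n)} \to v^\star$ with $\|v^\star\|_2 = 1$ and $\bar{v}^\star = 0$. Since $\nabla$ is linear (hence continuous) and $\|\cdot\|_{2,1}$ is continuous, passing to the limit yields $\|\nabla v^\star\|_{2,1} = 0$.

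The crux is then to show that $\|\nabla v^\star\|_{2,1} = 0$ forces $v^\star$ to be constant, which contradicts $\|v^\star\|_2 = 1$ together with $\bar{v}^\star = 0$. This is exactly where the periodic boundary conditions in the definition of $\nabla$ enter. From $\|\nabla v^\star\|_{2,1} = 0$ every summand $\sqrt{(\nabla_x v^\star)_{i,j}^2 + (\nabla_y v^\star)_{i,j}^2}$ vanishes, so $(\nabla_x v^\star)_{i,j} = (\nabla_y v^\star)_{i,j} = 0$ for all $(i,j)$. The vanishing horizontal differences (including the wrap-around term at $j=1$) make $v^\star$ constant along each row, and the vanishing vertical differences make it constant along each column; the periodic wrap-around links all pixels, so $v^\star$ is globally constant. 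A constant with zero mean is $0$, contradicting $\|v^\star\|_2 = 1$. This contradiction produces the desired $C$, and undoing the mean-subtraction reduction recovers \eqref{eq:discrete_poincare}.

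I expect the only genuinely delicate point to be the kernel characterization $\|\nabla v\|_{2,1} = 0 \Rightarrow v$ constant: it is immediate for standard forward-difference gradients, but here it must be checked against the specific periodic stencil, using that the wrap-around edges keep the underlying pixel graph connected. Everything else — the mean-subtraction identity, compactness of the unit sphere in the mean-zero subspace, and continuity of $\nabla$ and $\|\cdot\|_{2,1}$ — is routine in finite dimensions. An alternative, non-contradiction route would be to diagonalize $\nabla$ by the discrete Fourier transform and read off $C$ from the smallest nonzero singular value, but the compactness argument is cleaner and avoids an explicit constant, which the statement does not require.
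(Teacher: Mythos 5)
Your proposal is correct and follows essentially the same route as the paper: a contradiction argument on a normalized (mean-zero, unit-norm) sequence, compactness in finite dimensions, and the kernel characterization $\ker(\nabla)=\{c\mathbbm{1}\}$ to force the limit to be constant. The only difference is cosmetic — you reduce to mean-zero vectors up front while the paper normalizes the violating sequence directly — and you spell out the connectivity argument behind the kernel claim, which the paper simply asserts.
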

\begin{proof}
We prove it by contradiction. Suppose there exists a sequence $\{u_k\}_{k=1}^{\infty}$ such that
\begin{align}\label{eq:contradict_ineq}
    \left \|u_k - \bar{u}_k \mathbbm{1} \right \|_2 > k \|\nabla u_k\|_{2,1},
\end{align}
where $\bar{u}_k = \displaystyle \frac{1}{MN} \sum_{i=1}^{M}\sum_{j=1}^N (u_k)_{i,j}$.  For every $k$, we normalize each element in the sequence by
$
    v_k = \frac{u_k - \bar{u}_k \mathbbm{1} }{\left\|u_k - \bar{u}_k\mathbbm{1} \right\|_2}.
$
It is straightforward that
\begin{align} \label{eq:sequence_prop}
    \bar{v}_k = \frac{1}{MN} \sum_{i=1}^M \sum_{j=1}^N (v_k)_{i,j} = 0, \quad \|v_k\|_2 = 1\quad \forall k\in \mathbb{N}.
\end{align}
By \eqref{eq:contradict_ineq}, we have
\begin{align}\label{eq:gradient_seq}
\|\nabla v_k\|_{2,1} < \frac{1}{k}.    
\end{align}
As $\{v_k\}_{k=1}^{\infty}$ is bounded, there exists a convergent subsequence $\{v_{k_j}\}_{j=1}^{\infty}$ such that $v_{k_j} \rightarrow v^*$ for  $v^* \in X$. It follows from \eqref{eq:gradient_seq} that $
    \|\nabla v^*\|_{2,1}  = 0.$ Since $\text{ker}(\nabla) = \{c \mathbbm{1}: c \in \mathbb{R}\}$,  then $v^*$ is a constant vector. However, \eqref{eq:sequence_prop} implies that $\bar{v}^* = 0$ and $\|v^*\|_2=1$. This contradiction proves the lemma.
\end{proof}
\begin{lemma} \label{lemma: some_ineq1}
    Suppose $\|f\|_{\infty} < \infty$ and $\min_{i,j} f_{i,j} > 0$. There exists a scalar $u_0 > 0$ such that  we have $2(x - f_{i,j} \log x ) \geq x$ for any $\ x \geq u _0$ and $(i,j) \in \Omega$. 
\end{lemma}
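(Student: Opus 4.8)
The plan is to reduce the pixelwise family of inequalities to a single scalar inequality that is uniform in $(i,j)$, and then to exploit the sublinear growth of the logarithm. First I would rearrange the target inequality $2(x - f_{i,j}\log x) \ge x$ into the equivalent form $x \ge 2 f_{i,j}\log x$. Since $0 \le f_{i,j} \le \|f\|_\infty =: F < \infty$ for every pixel, it suffices to establish the pixel-independent bound $x \ge 2F\log x$: the left-hand side does not depend on $(i,j)$, while the right-hand side is largest when $f_{i,j}=F$, so a single threshold will make the inequality hold simultaneously for all $(i,j)\in\Omega$.

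Second, I would produce an explicit $u_0$ guaranteeing $x \ge 2F\log x$ for all $x \ge u_0$. Qualitatively this is immediate because $\log x / x \to 0$ as $x\to\infty$, so eventually $2F\log x \le x$; any $u_0$ past the crossing point works, and $u_0 > 0$ follows since $F \ge \min_{i,j} f_{i,j} > 0$. To avoid an abstract limit argument and obtain a usable constant, I would instead invoke the elementary inequality $\log x \le 2\sqrt{x}$ for all $x>0$ (verified by minimizing $h(x) = 2\sqrt{x} - \log x$, whose derivative $x^{-1/2} - x^{-1}$ is nonnegative for $x\ge 1$, giving the minimum value $h(1)=2>0$). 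Multiplying by $f_{i,j}\ge 0$ gives $f_{i,j}\log x \le 2F\sqrt{x}$, hence $2f_{i,j}\log x \le 4F\sqrt{x}$; requiring $4F\sqrt{x}\le x$ is exactly $x \ge 16F^2$. Thus $u_0 := 16\|f\|_\infty^2$ works, and back-substituting yields $2(x - f_{i,j}\log x) = 2x - 2f_{i,j}\log x \ge 2x - x = x$ for all $x \ge u_0$ and all $(i,j)$.

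The hard part, such as it is, is essentially bookkeeping rather than a genuine obstacle: one must ensure the estimate is truly uniform over all pixels (achieved by passing to $\|f\|_\infty$, which replaces the family $\{f_{i,j}\}$ by a single worst-case constant) and that the resulting $u_0$ is strictly positive (guaranteed by the hypothesis $\min_{i,j} f_{i,j} > 0$, which forces $\|f\|_\infty > 0$). A soft compactness or limit argument would also close the lemma, but the explicit constant $u_0 = 16\|f\|_\infty^2$ is cleaner and will be the more convenient form when this lemma feeds into the coercivity and boundedness estimates of Proposition~\ref{prop:bound}.
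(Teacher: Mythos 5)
Your proof is correct, and it takes a more constructive route than the paper's. The paper argues pixel by pixel: it sets $H(x) = x - 2f_{i,j}\log x$, notes that $H$ is strictly convex with minimizer $2f_{i,j}$ and hence increasing beyond it, invokes the asymptotic dominance of $x$ over $\log x$ to produce a (non-explicit) crossing point $u_{i,j} > 2f_{i,j}$ with $H(u_{i,j}) \geq 0$, and finally takes $u_0 = \max_{i,j} u_{i,j}$ over the finitely many pixels. You instead pass directly to the worst-case constant $F = \|f\|_\infty$ and use the elementary bound $\log x \leq 2\sqrt{x}$ to obtain the explicit threshold $u_0 = 16\|f\|_\infty^2$; this buys a concrete constant and avoids any appeal to the finiteness of $\Omega$ or to a soft limit argument, at the price of a slightly lossier threshold. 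Two small remarks. First, your justification in the opening paragraph that ``the right-hand side is largest when $f_{i,j}=F$'' is only valid when $\log x \geq 0$, i.e.\ $x \geq 1$; for $0 < x < 1$ the target inequality $x \geq 2f_{i,j}\log x$ holds trivially since the right side is nonpositive, and in any case your explicit chain $2f_{i,j}\log x \leq 4F\sqrt{x} \leq x$ for $x \geq 16F^2$ is valid without this reduction, so nothing breaks. Second, the lemma is actually consumed in the bound \eqref{eq:au_bound} inside the proof of Theorem~\ref{thm:solution_exist} rather than in Proposition~\ref{prop:bound}, but that does not affect the argument.
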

\begin{proof}
For each $(i,j) \in \Omega$, we want to show that there exists $u_{i,j} > 0$ such that $H(x) \coloneqq x - 2 f_{i,j} \log x \geq 0$ for $x \geq u_{i,j}$. Since $H(x)$ is strictly convex and it attains a global minimum at $x = 2f_{i,j}$, it is increasing on the domain $x > 2f_{i,j}$. Additionally as $x$ dominates $\log(x)$ as $x\rightarrow +\infty$, 
there exists $u_{i,j} > 2 f_{i,j} > 0$ such that 
$\frac{u_{i,j}}{\log u_{i,j}} \geq 2 f_{i,j},$ 
which implies that $H(u_{i,j}) = u_{i,j} - 2f_{i,j} \log u_{i,j} \geq 0.$
As a result, for $x \geq u_{i,j} > 2f_{i,j}$, we obtain
$
x - 2f_{i,j} \log x = H(x) \geq H(u_{i,j})\geq 0.
$ Define $u_0 \coloneqq \max_{i,j} u_{i,j}$, and hence we have $2(x - f_{i,j} \log x) \geq x$ for  $x \geq u_0 \geq u_{i,j}, \forall (i,j)\in \Omega$. 
\end{proof}

\begin{myprop}\label{prop:bound}
Suppose $\text{ker}(A) \cap \text{ker}(\nabla) = \{0\}$ and $\{u_k\}_{k=1}^{\infty} \subset X$. If $\{(Au_k, \nabla u_k)\}_{k=1}^{\infty}$ is bounded, then $\{u_k\}_{k=1}^{\infty}$ is bounded.
\end{myprop}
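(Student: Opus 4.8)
The plan is to argue by contradiction using a normalization-and-compactness argument, in the same spirit as the proof of Lemma~\ref{lemma:poincare}. The conceptual point is that the linear map $T\colon X \to X \times Y$ given by $Tu = (Au, \nabla u)$ is injective precisely because $\ker(A) \cap \ker(\nabla) = \{0\}$; since $X$ is finite-dimensional, an injective linear map is bounded below, and the proposition is merely the sequential restatement of this fact. I would realize this via contradiction rather than invoking norm equivalence directly, to keep the argument self-contained.

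First I would suppose, toward a contradiction, that $\{u_k\}_{k=1}^\infty$ is unbounded, and pass to a subsequence (not relabeled) with $\|u_k\|_2 \to \infty$. In particular $u_k \neq 0$ for large $k$, so the normalized vectors $w_k \coloneqq u_k / \|u_k\|_2$ are well defined and satisfy $\|w_k\|_2 = 1$. Because $\{w_k\}$ lies on the unit sphere of the finite-dimensional space $X$, which is compact, I extract a further convergent subsequence $w_{k_j} \to w^*$ with $\|w^*\|_2 = 1$.

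The crux is to show $w^* \in \ker(A) \cap \ker(\nabla)$. By linearity, $A w_{k_j} = A u_{k_j} / \|u_{k_j}\|_2$ and $\nabla w_{k_j} = \nabla u_{k_j} / \|u_{k_j}\|_2$. Since $\{(Au_k, \nabla u_k)\}_{k=1}^\infty$ is bounded while $\|u_{k_j}\|_2 \to \infty$, both of these quantities tend to $0$. Passing to the limit and using continuity of the linear operators $A$ and $\nabla$ gives $A w^* = 0$ and $\nabla w^* = 0$. The hypothesis $\ker(A) \cap \ker(\nabla) = \{0\}$ then forces $w^* = 0$, contradicting $\|w^*\|_2 = 1$. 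Hence $\{u_k\}_{k=1}^\infty$ must be bounded.

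I expect the main (and essentially only nontrivial) obstacle to be the bookkeeping in the normalization step: one must ensure the denominators $\|u_{k_j}\|_2$ genuinely diverge so that the bounded numerators $A u_{k_j}$ and $\nabla u_{k_j}$ are driven to zero, while simultaneously the normalization keeps $\|w_{k_j}\|_2 = 1$ so that the limit $w^*$ is nonzero. Everything else reduces to finite-dimensional sequential compactness and continuity of linear maps, both of which are automatic on $X = \mathbb{R}^{M \times N}$.
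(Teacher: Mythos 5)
Your argument is correct, but it takes a genuinely different route from the paper. The paper decomposes $u_k = (u_k - \bar{u}_k\mathbbm{1}) + \bar{u}_k\mathbbm{1}$, controls the mean-free part via the discrete Poincar\'e inequality of Lemma~\ref{lemma:poincare} ($\|u_k - \bar{u}_k\mathbbm{1}\|_2 \le C\|\nabla u_k\|_{2,1}$), and then bounds the mean by observing that the kernel condition forces $A\mathbbm{1}\neq 0$, so that $|\bar{u}_k|\,\|A\mathbbm{1}\|_2 \le C\|A\|\,\|\nabla u_k\|_{2,1} + \|Au_k\|_2$; the triangle inequality then finishes. You instead prove directly that the joint map $u\mapsto(Au,\nabla u)$ is bounded below, via the standard normalization-and-compactness contradiction. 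Both are valid. Your approach is more self-contained for this proposition (it does not invoke Lemma~\ref{lemma:poincare} at all, and indeed it is structurally the same argument the paper uses to \emph{prove} that lemma, just applied to the pair $(A,\nabla)$ rather than to $\nabla$ alone), and it generalizes verbatim to any finite family of linear operators with trivial common kernel, with no special role for the fact that $\ker(\nabla)$ consists of constants. What the paper's route buys is a quantitative bound --- $\|u_k\|_2$ is controlled explicitly in terms of the Poincar\'e constant $C$, $\|A\|$, and $\|A\mathbbm{1}\|_2$ --- and the reuse of a lemma already established, whereas your purely sequential argument yields boundedness without an explicit constant. One minor bookkeeping note: when you pass to a subsequence with $\|u_k\|_2\to\infty$ you should keep in mind that the conclusion of the contradiction applies to the original sequence (unboundedness of the full sequence is what supplies such a subsequence), but this is exactly how you phrased it, so there is no gap.
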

\begin{proof}
Since $\text{ker}(A) \cap \text{ker}(\nabla) = \{0\}$, we have $A\mathbbm{1} \neq 0$.
Simple calculations lead to
\begin{gather}
\begin{aligned}\label{eq:ineq2}
    |\bar{u}_k|\|A \mathbbm{1} \|_2 &= \|A (\bar{u}_k \mathbbm{1}) \|_2 \leq \| A (\bar{u}_k \mathbbm{1} - u_k)\|_2 + \|Au_k\|_2\\ & \leq \|A\| \|u_k - \bar{u}_k \mathbbm{1}\|_2 + \|Au_k\|_2\\ &\leq C\|A\| \|\nabla u_k\|_{2,1} + \|Au_k\|_2,
\end{aligned}
\end{gather}
where the last inequality is due to  Lemma \ref{lemma:poincare}.
The boundedness of $\{Au_k\}_{k=1}^{\infty}$ and $\{\nabla u_k\}_{k=1}^{\infty}$ implies that $\{\bar{u}_k\}_{k=1}^{\infty}$ is also bounded by \eqref{eq:ineq2}. We apply Lemma \ref{lemma:poincare} to obtain
\begin{align*}
    \|u_k\|_2 \leq \|u_k - \bar{u}_k\mathbbm{1}\|_2 + \|\bar{u}_k \mathbbm{1}\|_2 < C \|\nabla u_k \|_{2,1} + \|\bar{u}_k \mathbbm{1}\|_2 < \infty,
\end{align*}
which thereby proves that $\{u_k\}_{k=1}^{\infty}$ is bounded.
\end{proof}

Finally, we adapt the proof in \cite{chan2014two} to establish that $F$ has a global minimizer.

\begin{mythm}\label{thm:solution_exist}Suppose $\|f\|_{\infty} < \infty$ and $\min_{i,j} f_{i,j} > 0$. 
If $\lambda > 0, \mu \geq 0, \alpha \in [0,1)$, and $\text{ker}(A) \cap \text{ker}(\nabla) = \{0\}$, then $F$ has a global minimizer. 
\end{mythm}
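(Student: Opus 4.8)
The plan is to apply the direct method of the calculus of variations: establish that $F$ is bounded below, extract a bounded minimizing sequence, pass to a convergent subsequence, and verify that the limit remains feasible (so that the fidelity stays finite) and attains the infimum.

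First I would show that $F$ is bounded below. The two regularization contributions are nonnegative: $\frac{\mu}{2}\|\nabla u\|_2^2 \ge 0$, and since $|a|+|b| \ge \sqrt{a^2+b^2}$ pixelwise we have $\|\nabla u\|_1 \ge \|\nabla u\|_{2,1}$, so for $\alpha \in [0,1)$ the AITV term satisfies $\|\nabla u\|_1 - \alpha\|\nabla u\|_{2,1} \ge (1-\alpha)\|\nabla u\|_{2,1} \ge 0$. For the fidelity, each scalar map $x \mapsto x - f_{i,j}\log x$ attains its minimum at $x = f_{i,j}$, so $\langle Au - f\log Au, \mathbbm{1}\rangle \ge \sum_{i,j}(f_{i,j} - f_{i,j}\log f_{i,j})$, a finite constant because $\|f\|_\infty < \infty$ and $\min_{i,j} f_{i,j} > 0$. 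Hence $m \coloneqq \inf_u F(u)$ is finite, and I take a minimizing sequence $\{u_k\}$ with $F(u_k) \to m$; we may assume $F(u_k) \le m+1$, and necessarily $Au_k > 0$ pixelwise for every $k$ (otherwise the log term would be $+\infty$).

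Next I would show $\{u_k\}$ is bounded by invoking Proposition~\ref{prop:bound}, for which it suffices to bound $\{\nabla u_k\}$ and $\{Au_k\}$. Dropping the nonnegative $\frac{\mu}{2}\|\nabla u_k\|_2^2$ term, the bound $F(u_k)\le m+1$ together with the estimate $(1-\alpha)\|\nabla u_k\|_{2,1} \le \|\nabla u_k\|_1 - \alpha\|\nabla u_k\|_{2,1}$ yields a uniform bound on $\|\nabla u_k\|_{2,1}$, where $\alpha < 1$ is used crucially. To bound $Au_k$, I would split each pixel according to whether $(Au_k)_{i,j} \ge u_0$ (with $u_0$ from Lemma~\ref{lemma: some_ineq1}) or not. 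On the former set, Lemma~\ref{lemma: some_ineq1} gives $(Au_k)_{i,j} - f_{i,j}\log(Au_k)_{i,j} \ge \frac{1}{2} (Au_k)_{i,j}$; on the latter, $(Au_k)_{i,j} < u_0$ and the fidelity is bounded below by the constant $f_{i,j}-f_{i,j}\log f_{i,j}$. Combining these with $F(u_k) \le m+1$ yields a uniform bound on $\sum_{(i,j): (Au_k)_{i,j}\ge u_0} (Au_k)_{i,j}$, hence a uniform bound on $\|Au_k\|_\infty$ and thus on $\|Au_k\|_2$. Since the hypothesis $\text{ker}(A)\cap\text{ker}(\nabla)=\{0\}$ holds, Proposition~\ref{prop:bound} then gives that $\{u_k\}$ is bounded.

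Finally, by finite-dimensionality I extract a subsequence $u_{k_j} \to u^*$, so that $Au_{k_j} \to Au^*$ by continuity of $A$. The main obstacle is the logarithm: I must rule out $(Au^*)_{i,j}=0$ for some pixel, which would force $F(u^*)=+\infty$. If this occurred, then along the subsequence $(Au_{k_j})_{i,j}\to 0$, so $-f_{i,j}\log(Au_{k_j})_{i,j}\to+\infty$; because the remaining pixel contributions are bounded below and the regularizers are nonnegative, this would force $F(u_{k_j})\to+\infty$, contradicting $F(u_{k_j})\to m$. Hence $Au^* > 0$ pixelwise, and on a neighborhood of $u^*$ every term of $F$ is continuous, so $F(u^*) = \lim_j F(u_{k_j}) = m$. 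Therefore $u^*$ is a global minimizer. I expect the boundary argument for the $\log$ term and the pixel-splitting estimate on $Au_k$ to be the delicate parts, while the compactness and continuity steps are routine in finite dimensions.
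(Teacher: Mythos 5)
Your proposal is correct and follows essentially the same route as the paper's proof: both bound $F$ from below via the minimizer of $x \mapsto x - f_{i,j}\log x$, bound $\{\nabla u_k\}$ using $\alpha<1$, bound $\{Au_k\}$ via Lemma~\ref{lemma: some_ineq1} (your pixel-splitting at $u_0$ is just a repackaging of the paper's positive/negative-part estimate), invoke Proposition~\ref{prop:bound}, and pass to a convergent subsequence. Your explicit exclusion of $(Au^*)_{i,j}=0$ at the limit is a slightly more careful finish than the paper's one-line appeal to lower semicontinuity, but it is the same argument in substance.
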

\begin{proof}
It is straightforward that  $\|\nabla u\|_{2,1} \leq \|\nabla u\|_1$, thus $\|\nabla u\|_1 - \alpha \|\nabla u\|_{2,1}\geq 0$ for $\alpha\in[0,1)$. As a result, we have
\begin{align*}
    F(u) \geq \lambda  \langle Au - f \log Au, \mathbbm{1} \rangle = \lambda \sum_{i=1}^M \sum_{j=1}^N (Au)_{i,j} - f_{i,j} \log (Au)_{i,j}.
\end{align*}

Given a scalar $f>0,$ the function $G(x)=x-f\log(x)$ attains its global minimum at $x=f.$ Therefore, 
we have $x - f_{i,j} \log x \geq f_{i,j} - f_{i,j} \log f_{i,j}$  for all $x > 0$ and $(i,j) \in \Omega$, which leads to a lower bound of $F(u)$, i.e.,
\begin{align}\label{eq:ineq1}
    F(u) \geq \lambda \sum_{i=1}^M \sum_{j=1}^N (Au)_{i,j} - f_{i,j} \log (Au)_{i,j} \geq \lambda \sum_{i=1}^M \sum_{j=1}^N f_{i,j} - f_{i,j} \log f_{i,j} \eqqcolon F_{0}.
\end{align}

As $F(u)$ is lower bounded by  $F_0$,  we can choose a minimizing sequence $\{u_k\}_{k=1}^{\infty}$ and hence $F(u_k)$ has a uniform upper bound, denoted by $B_1$, i.e., $F(u_k) < B_1$ for all $k \in \mathbb{N}$.  It further follows from \eqref{eq:ineq1} that
\begin{align*}
    B_1 \geq F(u_k) \geq \lambda  \langle Au_k - f \log Au_k, \mathbbm{1} \rangle \geq F_0,
\end{align*}
which implies that $\{|\langle Au_k - f \log Au_k, \mathbbm{1} \rangle|\}_{k=1}^{\infty}$ is uniformly bounded, i.e., there exists a constant $B_2 > 0$ such that $\left|\langle Au_k - f \log Au_k, \mathbbm{1} \rangle \right| < B_2, \ \forall k$. Using these uniform bounds, we derive that
\begin{align*}
    (1- \alpha) \|\nabla u_k\|_1 \leq \frac{\mu}{2} \|\nabla u_k\|_2^2 + \|\nabla u_k\|_1 - \alpha  \|\nabla u_k\|_{2,1} = F(u_k) -  \lambda  \langle Au_k - f \log Au_k, \mathbbm{1} \rangle \leq B_1+\lambda B_2.
\end{align*}
As $\alpha<1$, the sequence $\{ \nabla u_k\}_{k=1}^{\infty}$ is bounded.

To prove the boundedness of  $\{Au_k\}_{k=1}^{\infty}$, we introduce the notations of   $x^+ = \max(x,0)$ and $x^- = -\min(x,0)$ for any $x \in \mathbb{R}$. Then $x = x^+ - x^-$. 
By Lemma \ref{lemma: some_ineq1}, there exists $u_0 > 0$ such that $2(x - f_{i,j} \log x) \geq x, \ \forall x \geq u_0$ and $(i,j) \in \Omega$. We observe that
\begin{gather}
\begin{aligned} \label{eq:au_bound}
    \|Au_k\|_1 &= \sum_{i=1}^M \sum_{j=1}^N |(Au_k)_{i,j}|  \leq \sum_{i=1}^M \sum_{j=1}^N \max \{ 2 ((Au_k)_{i,j} - f_{i,j} \log (Au_k)_{i,j}), u_0 \} \\
    &\leq 2  \sum_{i=1}^M \sum_{j=1}^N \left((Au_k)_{i,j} - f_{i,j} \log (Au_k)_{i,j} \right)^+ + MNu_0 \\
    & = 2 \sum_{i=1}^M \sum_{j=1}^N \left[\left((Au_k)_{i,j} - f_{i,j} \log (Au_k)_{i,j} \right) + \left((Au_k)_{i,j} - f_{i,j} \log (Au_k)_{i,j} \right)^- \right]+ MNu_0 \\
    &= 2\langle Au_k - f \log Au_k, \mathbbm{1} \rangle + 2 \sum_{i=1}^M \sum_{j=1}^N \left((Au_k)_{i,j} - f_{i,j} \log (Au_k)_{i,j} \right)^- +MNu_0\\
    &\leq 2B_2 + 2 \sum_{i=1}^M \sum_{j=1}^N \left| f_{i,j} - f_{i,j} \log f_{i,j}\right| + MNu_0 < \infty.
\end{aligned}
\end{gather}
This shows that $\{Au_k\}_{k=1}^{\infty}$ is bounded.

Since both $\{ \nabla u_k\}_{k=1}^{\infty}$ and $\{Au_k\}_{k=1}^{\infty}$ are bounded, then  $\{u_k\}_{k=1}^{\infty}$ is bounded due to Proposition \ref{prop:bound}. Therefore,  there exists a subsequence $\{u_{k_n}\}_{n=1}^{\infty}$ that converges to some $u^* \in X$. As $F$ is continuous and thus lower semicontinuous, we have
\begin{align*}
    F(u^*) \leq \liminf_{n \rightarrow \infty} F(u_{k_n}), 
\end{align*}
which means that $u^*$ minimizes $F$.  
\end{proof}

\subsection{Numerical Algorithm}

To minimize \eqref{eq:ms_poisson_aitv}, we introduce two auxiliary variables $v \in X$ and $w = (w_x, w_y) \in Y$, leading to an equivalent constrained optimization problem:
    \begin{equation}
\begin{aligned}\label{eq:constrained_opt}
    \min_{u,v, w}  & \quad  \lambda \langle v - f \log v, \mathbbm{1} \rangle + \frac{\mu}{2} \|\nabla u\|_2^2 + \|w \|_1 - \alpha  \|w\|_{2,1} \\
    \text{s.t.} & \quad Au = v,  \quad \nabla u = w.
\end{aligned}
\end{equation}
The corresponding augmented Lagrangian is expressed as
\begin{gather}
\begin{aligned}
\label{eq:lagrange}
    \mathcal{L}_{\beta_1, \beta_2}(u, v,w, y, z) =  &\lambda \langle v - f \log v, \mathbbm{1} \rangle + \frac{\mu}{2} \|\nabla u\|_2^2 + \|w \|_1 - \alpha  \|w\|_{2,1}\\ &+ \langle y, Au -v \rangle + \frac{\beta_1}{2} \|Au - v\|_2^2 + \langle z, \nabla u -w \rangle + \frac{\beta_2}{2} \|\nabla u - w\|_2^2,
    \end{aligned}
    \end{gather}
where $y \in X$ and $z = (z_x, z_y) \in Y$ are Lagrange multipliers and $\beta_1, \beta_2$ are positive parameters.  We then apply the alternating direction method of multipliers (ADMM) to minimize \eqref{eq:constrained_opt} that consists of the following steps per iteration $k$:
\begin{subequations}\label{eq:poisson_admm}
    \begin{align}
    u_{k+1} &= \argmin_u \mathcal{L}_{\beta_{1,k}, \beta_{2,k}} (u, v_k, w_k, y_k, z_k), \label{eq:u_subprob}\\
    v_{k+1} &= \argmin_v \mathcal{L}_{\beta_{1,k}, \beta_{2,k}} (u_{k+1}, v, w_k, y_k, z_k), \label{eq:v_subprob}\\
    w_{k+1} &= \argmin_w \mathcal{L}_{\beta_{1,k}, \beta_{2,k}} (u_{k+1}, v_{k+1}, w, y_k, z_k), \label{eq:w_subprob}\\
    y_{k+1} &= y_k + \beta_{1,k}(Au_{k+1} - v_{k+1}), \label{eq:y_eq}\\
    z_{k+1} &= z_k + \beta_{2,k}(\nabla u_{k+1} - w_{k+1}), \label{eq:z_eq}\\
    (\beta_{1,k+1}, \beta_{2,k+1})  &= \sigma (\beta_{1,k}, \beta_{2,k}), \label{eq:poisson_admm_beta}
    \end{align}
\end{subequations}
where $\sigma > 1$. 
\begin{remark}
The scheme presented in \eqref{eq:poisson_admm} slightly differs from the original ADMM  \cite{boyd2011distributed}, the latter of which has $\sigma = 1$ in \eqref{eq:poisson_admm_beta}. Having $\sigma >1$ increases the weights of the penalty parameters $\beta_{1,k}, \beta_{2,k}$ in each iteration $k$, thus accelerating the numerical convergence speed of the proposed ADMM algorithm. A similar technique has been used in \cite{cascarano2021efficient,gu2017weighted, storath2014fast,storath2014jump,you2019nonconvex}.
\end{remark}

All the subproblems \eqref{eq:u_subprob}-\eqref{eq:w_subprob} have closed-form solutions. In particular, the first-order optimality condition for \eqref{eq:u_subprob} is
\begin{align} \label{eq:first_order_u}
    [\beta_{1,k} A^{\top}A - (\mu+ \beta_{2,k})\Delta]u^{k+1} = A^{\top}(\beta_{1,k}v_k - y_k) - \nabla^{\top}(z_k - \beta_{2,k}w_k),
\end{align}
where $\Delta = -\nabla^{\top}\nabla$ is the Laplacian operator. If $\text{ker}(A) \cap \text{ker}(\nabla) = \{0\}$, then $\beta_{1,k} A^{\top}A - (\mu+ \beta_{2,k})\Delta$ is positive definite and thereby invertible, which implies that \eqref{eq:first_order_u} has a unique solution $u^{k+1}$. By assuming periodic boundary condition for $u$, the operators $\Delta$ and $A^{\top}A$ are block circulant \cite{wang2008new}, and hence \eqref{eq:first_order_u} can be solved efficiently by the 2D discrete Fourier transform $\mathcal{F}$. Specifically, we have the formula
\begin{align}
    u_{k+1} = \mathcal{F}^{-1}\left( \frac{\mathcal{F}(A)^*\circ\mathcal{F}(\beta_{1,k}v_k -y_k) - \mathcal{F}(\nabla)^* \circ \mathcal{F}(z_k - \beta_{2,k} w_k)}{\beta_{1,k} \mathcal{F}(A)^* \circ \mathcal{F}(A) - (\mu + \beta_{2,k} )\mathcal{F}(\Delta)} \right),
\end{align}
where $\mathcal{F}^{-1}$ is the inverse discrete Fourier transform, the superscript $*$ denotes complex conjugate, the operation $\circ$ is componentwise multiplication, and division is componentwise. 
By differentiating the objective function of \eqref{eq:v_subprob} and setting it to zero, we can get a  closed-form solution for $v_{k+1}$ given by
\begin{align}
    v_{k+1} = \frac{\left(\beta_{1,k} Au_{k+1} + y_{k}- \lambda \mathbbm{1}\right)+\sqrt{\left(\beta_{1,k} Au_{k+1} + y_{k}- \lambda \mathbbm{1}\right)^2+4\lambda\beta_{1,k} f}}{2\beta_{1,k}},
\end{align}
where the square root, squaring, and division are performed componentwise. Lastly,  the $w$-subproblem \eqref{eq:w_subprob} can be decomposed componentwise as follows:
\begin{gather}
\begin{aligned}\label{eq:pixel_w_subprob}
    (w_{i,j})_{k+1} &= \argmin_{w_{i,j}}  \|w_{i,j}\|_1 - \alpha \|w_{i,j}\|_2  + \frac{\beta_{2,k}}{2}  \left\| w_{i,j} - \left( (\nabla u_{k+1})_{i,j} + \frac{(z_k)_{i,j}}{\beta_{2,k}}\right) \right\|_2^2\\
    &= \text{prox}\left((\nabla u_{k+1})_{i,j} + \frac{(z_k)_{i,j}}{\beta_{2,k}}, \alpha, \frac{1}{\beta_{2,k}}\right),
\end{aligned}
\end{gather}
where the proximal operator for $\ell_1 - \alpha \ell_2$ on $x \in \mathbb{R}^n$ is given by
\begin{align}\label{eq:prox}
    \text{prox}(x, \alpha, \beta) = \argmin_y \|y\|_1 - \alpha \|y\|_2 + \frac{1}{2 \beta} \|x-y\|_2^2. 
\end{align}
The proximal operator for $\ell_1 - \alpha \ell_2$ has a closed form solution summarized by Lemma \ref{lemma:prox}.

\begin{lemma}[\cite{lou2018fast}]\label{lemma:prox}
Given $x \in \mathbb{R}^n$, $\beta >0$, and $\alpha \in [0,1]$, the optimal solution to \eqref{eq:prox} is given by one of the following cases:
\begin{enumerate}
    \item When $\|x\|_{\infty} > \beta$, we have
    \begin{align*}
        x^* = (\|\xi\|_2 + \alpha \beta) \frac{\xi}{\|\xi\|_2},
    \end{align*}
     where $\xi= \sign(x)\circ\max(|x|-\beta,0)$. 
    \item When $(1-\alpha) \beta < \|x\|_{\infty} \leq \beta$, then $x^*$ is a 1-sparse vector such that one chooses $i \in \displaystyle \argmax_j(|x_j|)$ and defines $x^*_i=\left(|x_i| + (\alpha-1)\beta\right)\sign(x_i)$ and the remaining  elements equal to 0.
\item When $\|x\|_{\infty} \leq (1- \alpha)\beta$, then $x^* = 0$. 
\end{enumerate}
\end{lemma}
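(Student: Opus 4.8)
The plan is to convert the non-convex problem \eqref{eq:prox} into a family of ordinary soft-thresholding problems by exploiting the variational representation of the Euclidean norm, and then to resolve the resulting self-consistency condition through a case analysis governed by $\|x\|_\infty$ relative to the two thresholds $\beta$ and $(1-\alpha)\beta$. Throughout I write $S_\beta(z) = \sign(z)\circ\max(|z|-\beta,0)$ for the soft-thresholding operator, which is the exact minimizer of $\|y\|_1 + \tfrac{1}{2\beta}\|z-y\|_2^2$.

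First I would reduce to the nonnegative orthant. For fixed $|y_i|$ the term $\tfrac{1}{2\beta}(x_i-y_i)^2$ is minimized by taking $\sign(y_i)=\sign(x_i)$, whereas $\|y\|_1$ and $\|y\|_2$ depend only on the magnitudes $|y_i|$. Hence every minimizer $x^*$ obeys $\sign(x^*_i)=\sign(x_i)$ on its support, and it suffices to solve the problem for $x\geq 0$ and recover the general case via $y\mapsto \sign(x)\circ|y|$; this accounts for the $\sign(x)$ factors in all three cases. The central device is the identity $\|y\|_2=\max_{\|d\|_2\le 1}\langle d,y\rangle$, so that $-\alpha\|y\|_2=\min_{\|d\|_2\le 1}\bigl(-\alpha\langle d,y\rangle\bigr)$ for $\alpha\geq 0$. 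Substituting into \eqref{eq:prox}, exchanging the two infima, and completing the square in $y$ gives
\[
\min_y\Bigl(\|y\|_1 - \alpha\|y\|_2 + \tfrac{1}{2\beta}\|x-y\|_2^2\Bigr)
= \min_{\|d\|_2 \le 1}\Bigl(\min_y\bigl(\|y\|_1 + \tfrac{1}{2\beta}\|(x+\alpha\beta d) - y\|_2^2\bigr) + C(d)\Bigr),
\]
where $C(d)$ collects the terms independent of $y$. For each fixed $d$ the inner minimizer is $y=S_\beta(x+\alpha\beta d)$, and viewing the right-hand side as a joint minimization over $(y,d)$, the first-order condition in $d$ forces $d^*=x^*/\|x^*\|_2$ whenever $x^*\neq 0$. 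This turns the problem into the self-consistent equation $x^*=S_\beta\bigl(x+\alpha\beta\,x^*/\|x^*\|_2\bigr)$.

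I would then solve this fixed point by cases. When $\|x\|_\infty>\beta$, the vector $\xi=S_\beta(x)$ is nonzero; adding $\alpha\beta\,\xi/\|\xi\|_2$ pushes each surviving coordinate further from the origin while leaving the off-support coordinates below the threshold $\beta$, so the direction is preserved and the fixed point resolves explicitly to $x^*=(\|\xi\|_2+\alpha\beta)\xi/\|\xi\|_2$, which is Case~1. When $\|x\|_\infty\le\beta$ the plain soft-thresholding $\xi$ vanishes, so no dense stationary point survives, and restricting the objective to a single coordinate $i$ collapses $\|y\|_1-\alpha\|y\|_2$ to $(1-\alpha)|y_i|$; the corresponding scalar proximal problem is soft-thresholding with parameter $(1-\alpha)\beta$, yielding the best $1$-sparse candidate supported at $i\in\argmax_j|x_j|$ with $x^*_i=(|x_i|+(\alpha-1)\beta)\sign(x_i)$. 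This candidate is active exactly when $\|x\|_\infty>(1-\alpha)\beta$ (Case~2) and collapses to $x^*=0$ otherwise (Case~3).

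The main obstacle is that the objective is non-convex, so stationarity does not by itself certify global optimality: I must compare the objective values of the competing configurations — the zero vector, the optimal $1$-sparse vector, and the dense-on-support vector — and show that the claimed candidate strictly dominates in each regime. The delicate points are establishing that the breakpoints $\|x\|_\infty=\beta$ and $\|x\|_\infty=(1-\alpha)\beta$ are exactly where the winning configuration changes, and ruling out any stationary point whose support size lies strictly between one and full in the regime $\|x\|_\infty\le\beta$. The min-exchange reformulation is precisely what makes this comparison tractable, since it reduces every candidate to an explicit soft-thresholding output whose objective value admits a closed form.
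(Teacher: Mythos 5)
The paper does not prove this lemma at all: it is imported verbatim from \cite{lou2018fast}, so there is no internal proof to compare against. Judging your proposal on its own merits, the reformulation you build is sound: writing $-\alpha\|y\|_2=\min_{\|d\|_2\le 1}(-\alpha\langle d,y\rangle)$, treating the problem as a joint minimization over $(y,d)$, and completing the square does legitimately yield the necessary condition $x^*=S_\beta\bigl(x+\alpha\beta\,x^*/\|x^*\|_2\bigr)$ for any nonzero minimizer, and your verification that the Case~1 formula satisfies this fixed point is correct. The sign reduction to the nonnegative orthant is also fine.

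The genuine gap is that the step you yourself flag as ``the main obstacle'' --- certifying that the named candidate is the \emph{global} minimizer --- is never carried out, and it is precisely the substance of the lemma. The fixed-point equation admits, in general, a stationary point for every support set $T$ on which the self-consistency system $|x^*_i|=|x_i|-\beta+\alpha\beta|x^*_i|/\|x^*\|_2$ ($i\in T$) is solvable, so the candidate list is not just ``zero, best $1$-sparse, dense-on-support'' but potentially one candidate per subset of coordinates; without a structural argument (e.g., that the support of a global minimizer must consist of the largest entries of $|x|$, which is how \cite{lou2018fast} prunes the list) the comparison you defer is combinatorial. Moreover, your argument in Case~2 that ``the plain soft-thresholding $\xi$ vanishes, so no dense stationary point survives'' is a non sequitur: the fixed point involves $S_\beta(x+\alpha\beta\,x^*/\|x^*\|_2)$, not $S_\beta(x)$, and when $\|x\|_\infty\le\beta$ one can still have multi-coordinate stationary points provided $\|x^*\|_2<\alpha\beta$ (take $t_i=(|x_i|-\beta)/(\|x^*\|_2-\alpha\beta)>0$ with $\sum_i t_i^2=1$). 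These must be eliminated by an explicit objective-value comparison, together with the boundary analysis at $\|x\|_\infty=(1-\alpha)\beta$ distinguishing Cases~2 and~3. Until those comparisons are supplied, the proposal establishes only that the stated formulas are stationary points, not that they are the optimal solutions claimed by the lemma.
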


In summary, we describe the ADMM scheme to solve \eqref{eq:ms_poisson_aitv} in Algorithm \ref{alg:admm}.
\begin{algorithm*}[t!!!]
  \textbf{Input:}{\begin{itemize}
      \item image $f$
      \item blurring operator $A$
      \item fidelity parameter $\lambda >0$
      \item smoothing parameter $\mu \geq 0$
      \item AITV parameter $\alpha \in [0,1]$
      \item penalty parameters $\beta_{1,0}, \beta_{2,0}> 0$
      \item penalty multiplier $\sigma > 1$
      \item relative error $\epsilon >0 $ 
  \end{itemize}}
  \textbf{Output:}{$u_{k}$}\\
    Initialize $u_0, w_0, z_0$.\\
    Set $k=0$.\\
   \While{$\frac{\|u_{k}-u_{k-1}\|_2}{\|u_{k}\|_2} > \epsilon$}
   {
\begin{subequations}
   \begin{align*}
            u_{k+1} &= \mathcal{F}^{-1}\left( \frac{\mathcal{F}(A)^*\circ\mathcal{F}(\beta_{1,k}v_k -y_k) - \mathcal{F}(\nabla)^* \circ \mathcal{F}(z_k - \beta_{2,k} w_k)}{\beta_{1,k} \mathcal{F}(A)^* \circ \mathcal{F}(A) - (\mu + \beta_{2,k} )\mathcal{F}(\Delta)} \right)\\
           v_{k+1} &= \frac{\left(\beta_{1,k} Au_{k+1} + y_{k}- \lambda \mathbbm{1}\right)+\sqrt{\left(\beta_{1,k} Au_{k+1} + y_{k}- \lambda \mathbbm{1}\right)^2+4\lambda\beta_{1,k} f}}{2\beta_{1,k}}\\
               (w_{k+1})_{i,j} &= \text{prox}\left((\nabla u_{k+1})_{i,j} + \frac{(z_k)_{i,j}}{\beta_{2,k}}, \alpha, \frac{1}{\beta_{2,k}}\right) \quad \forall (i,j) \in \Omega\\
                    y_{k+1} &= y_k + \beta_{1,k}(Au_{k+1} - v_{k+1}) \\
                    z_{k+1} &= z_k + \beta_{2,k}(\nabla u_{k+1} - w_{k+1})\\
    (\beta_{1,k+1}, \beta_{2,k+1})  &= \sigma (\beta_{1,k}, \beta_{2,k})\\
    k &\coloneqq k+1
   \end{align*}
\end{subequations}
   }

\caption{ADMM for the AITV-Regularized Smoothing Model with Poisson Fidelity \eqref{eq:ms_poisson_aitv}}
\label{alg:admm}

\end{algorithm*}
\subsection{Convergence Analysis}
We establish the subsequential convergence of ADMM   described in Algorithm \ref{alg:admm}. The global convergence of ADMM \cite{wang2019global} is inapplicable to our model as the gradient operator $\nabla$ is non-surjective, which will be further investigated in future work. For the sake of brevity, we set $\beta = \beta_1 = \beta_2$ and denote
\begin{align*}
    \mathcal{L}_{\beta}(u,v,w,y,z) \coloneqq \mathcal{L}_{\beta, \beta}(u,v,w,y,z).
\end{align*}
In addition, we introduce definitions of subdifferentials~\cite{rockafellar2009variational}, which defines a stationary point of a non-smooth objective function.


\begin{definition}
For a proper function $h: \mathbb{R}^n \rightarrow \mathbb{R} \cup \{+\infty\}$,  define $\text{dom}(h) \coloneqq \{ x\in \mathbb{R}^n: h(x) < +\infty\}$. 
\begin{enumerate}
    \item[(a)] The regular subdifferential at $x \in \text{dom}(h)$ is given by
    \begin{align*}
        \hat{\partial}{h}(x) \coloneqq \left\{w: \liminf_{x' \rightarrow x, x' \neq x} \frac{h(x')-h(x) - \langle w, x'-x \rangle}{\|x'-x\|} \geq 0\right\}.
    \end{align*}
    \item[(b)]  The (limiting) subdifferential at $x \in \text{dom}(h)$ is given by \begin{align*}
        \partial{h}(x) \coloneqq \left\{ w: \exists \, x_k \rightarrow x \text{ and } w_k \in  \hat{\partial}{h}(x_k) \text{ with } w_k \rightarrow w \text{ and } h(x_k) \rightarrow h(x)\right\}.
    \end{align*}
\end{enumerate}
\end{definition}
An important property of the limiting subdifferential is its closedness: for any $(x_k, v_k) \rightarrow (x,v)$ with $v_k \in \partial{h}(x_k)$, if $h(x_k)  \rightarrow h(x)$, then $v \in \partial{h}(x)$.  

\begin{lemma}\label{lemma:sufficient_decrease}
Suppose that $\text{ker}(A) \cap \text{ker}(\nabla) = \{0\}$ and $0 \leq \alpha < 1$. Let $\{(u_k, v_k, w_k, y_k, z_k)\}_{k=1}^{\infty}$ be a sequence generated by Algorithm \ref{alg:admm}. Then we have
\begin{gather}
\begin{aligned}\label{eq:lagrange_ineq}
    &\mathcal{L}_{\beta_{k+1}}(u_{k+1}, v_{k+1}, w_{k+1}, y_{k+1}, z_{k+1}) - \mathcal{L}_{\beta_k}(u_k, v_k, w_k, y_k, z_k)\\ &\leq - \frac{\nu}{2} \|u_{k+1} - u_k\|_2^2- \frac{\beta_0}{2} \|v_{k+1} - v_k\|_2^2 + \frac{1}{\sigma^{k-1} \beta_0}\left(\left\|y_{k+1}-y_k \right\|_2^2 +\left\|z_{k+1}-z_k \right\|_2^2 \right),
\end{aligned}
\end{gather}
for some constant $\nu > 0$.
\end{lemma}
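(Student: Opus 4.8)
The plan is to control the one-step change of the augmented Lagrangian \eqref{eq:lagrange} (taken with $\beta_1=\beta_2=\beta$) by inserting intermediate iterates and telescoping, so that each term isolates the effect of a single block update \eqref{eq:u_subprob}--\eqref{eq:w_subprob}, the dual ascent \eqref{eq:y_eq}--\eqref{eq:z_eq}, and the penalty rescaling \eqref{eq:poisson_admm_beta}. Performing the updates in the order the algorithm does, I would write
\begin{align*}
&\mathcal{L}_{\beta_{k+1}}(u_{k+1},v_{k+1},w_{k+1},y_{k+1},z_{k+1}) - \mathcal{L}_{\beta_k}(u_k,v_k,w_k,y_k,z_k) = T_\beta + T_{yz} + T_w + T_v + T_u,
\end{align*}
where $T_\beta$ is the change coming only from replacing $\beta_k$ by $\beta_{k+1}$ at the point $(u_{k+1},v_{k+1},w_{k+1},y_{k+1},z_{k+1})$; $T_{yz}$ is the change from updating $(y_k,z_k)$ to $(y_{k+1},z_{k+1})$ at fixed primal variables and fixed $\beta_k$; and $T_w, T_v, T_u$ are, respectively, the changes produced by the $w$-, $v$-, and $u$-subproblems at penalty $\beta_k$. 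The first two terms will be nonnegative and must be bounded above, while the last three supply the decrease.

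For the three subproblem terms I would exploit that each update is a \emph{global} minimizer of its subproblem. The $u$-objective is quadratic with Hessian $\beta_k A^{\top}A + (\mu+\beta_k)\nabla^{\top}\nabla$; since $\beta_k \geq \beta_0$ and $\mu \geq 0$, this matrix dominates $\beta_0\bigl(A^{\top}A + \nabla^{\top}\nabla\bigr)$, which is positive definite because $\text{ker}(A)\cap\text{ker}(\nabla)=\{0\}$. Hence the $u$-subproblem is $\nu$-strongly convex with the uniform constant $\nu \coloneqq \beta_0\,\lambda_{\min}\bigl(A^{\top}A+\nabla^{\top}\nabla\bigr) > 0$, giving $T_u \leq -\tfrac{\nu}{2}\|u_{k+1}-u_k\|_2^2$. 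The $v$-objective $\lambda\langle v - f\log v,\mathbbm{1}\rangle - \langle y_k,v\rangle + \tfrac{\beta_k}{2}\|Au_{k+1}-v\|_2^2$ is $\beta_k$-strongly convex, since $v - f\log v$ is convex on $v>0$ and the penalty adds $\beta_k I$ to the Hessian; using $\beta_k \geq \beta_0$ yields $T_v \leq -\tfrac{\beta_0}{2}\|v_{k+1}-v_k\|_2^2$. The $w$-subproblem is nonconvex, but $w_{k+1}$ is a global minimizer by Lemma \ref{lemma:prox}, so it suffices to use the trivial bound $T_w \leq 0$; no quadratic gain is extracted here.

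For the two positive terms I would substitute the dual relations $Au_{k+1}-v_{k+1}=\tfrac{1}{\beta_k}(y_{k+1}-y_k)$ and $\nabla u_{k+1}-w_{k+1}=\tfrac{1}{\beta_k}(z_{k+1}-z_k)$ from \eqref{eq:y_eq}--\eqref{eq:z_eq}. Then $T_{yz} = \tfrac{1}{\beta_k}\bigl(\|y_{k+1}-y_k\|_2^2 + \|z_{k+1}-z_k\|_2^2\bigr)$, and since only the penalty terms feel the rescaling, $T_\beta = \tfrac{\beta_{k+1}-\beta_k}{2\beta_k^2}\bigl(\|y_{k+1}-y_k\|_2^2+\|z_{k+1}-z_k\|_2^2\bigr) = \tfrac{\sigma-1}{2\beta_k}\bigl(\|y_{k+1}-y_k\|_2^2+\|z_{k+1}-z_k\|_2^2\bigr)$. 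Adding them gives the factor $\tfrac{\sigma+1}{2\beta_k}$; using $\beta_k=\sigma^k\beta_0$ and $\sigma>1$ one has $\tfrac{\sigma+1}{2\beta_k}=\tfrac{\sigma+1}{2\sigma}\cdot\tfrac{1}{\sigma^{k-1}\beta_0}\leq \tfrac{1}{\sigma^{k-1}\beta_0}$, which is exactly the coefficient in \eqref{eq:lagrange_ineq}. Collecting the five estimates proves the lemma.

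The main obstacle, and the feature that departs from textbook ADMM analysis, is the increasing penalty $\beta_k\to\infty$: the penalty and dual terms can no longer be absorbed or dropped and must be tracked explicitly, which is why the bound carries the geometric weight $\sigma^{-(k-1)}$ rather than a constant. Two points require care in making this rigorous: securing a strong-convexity modulus $\nu$ for the $u$-step that is \emph{independent of} $k$ (achieved by lower-bounding $\beta_k$ by $\beta_0$), and accepting that the nonconvex $w$-step contributes no negative quadratic term, so that the primal decrease is furnished by $u$ and $v$ alone. The geometric summability of $\sum_k \sigma^{-(k-1)}$ is what will ultimately let the positive $y,z$ contributions be reconciled with the primal decrease in the subsequent boundedness and convergence arguments.
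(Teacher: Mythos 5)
Your proposal is correct and follows essentially the same route as the paper: the same telescoping decomposition into block-update, dual-ascent, and penalty-rescaling increments, the same uniform strong-convexity bounds for the $u$- and $v$-steps (with modulus independent of $k$ via $\beta_k\geq\beta_0$), the trivial bound for the nonconvex $w$-step, and the same substitution of the dual updates followed by $\frac{\sigma+1}{2\sigma^k\beta_0}\leq\frac{1}{\sigma^{k-1}\beta_0}$. No gaps.
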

\begin{proof}
If $\text{ker}(A) \cap \text{ker}(\nabla) = \{0\}$, then $\beta_0 A^{\top}A +(\beta_0 + \mu)\nabla^{\top}\nabla$ is positive definite, and hence there exists $\nu > 0$ such that
\begin{align*}
    \beta_k \|Au\|_2^2 +(\beta_k + \mu)\|\nabla u\|_2^2 \geq \beta_0 \|Au\|_2^2 +(\beta_0 + \mu)\|\nabla u\|_2^2 \geq \nu \|u \|_2^2\; \quad \forall k \in \mathbb{N},
\end{align*}
which implies that $\mathcal{L}_{\beta_k}(u, v_k, w_k, y_k, z_k)$ is strongly convex with respect to $u$ with parameter $\nu$. Additionally, 
$\mathcal{L}_{\beta_k}(u_{k+1}, v, w_k, y_k, z_k)$ is strongly convex with respect to $v$ with parameter $\beta_0 \leq \beta_k$.
It follows from \cite[Theorem 5.25]{beck2017first} that we have
\begin{align}\label{eq:ineq3}
    \mathcal{L}_{\beta_k}(u_{k+1}, v_k, w_k, y_k, z_k)- \mathcal{L}_{\beta_k}(u_k, v_k, w_k, y_k, z_k) &\leq - \frac{\nu}{2} \|u_{k+1} - u_k\|_2^2,\\
    \mathcal{L}_{\beta_k}(u_{k+1}, v_{k+1}, w_k, y_k, z_k) - \mathcal{L}_{\beta_k}(u_{k+1}, v_k, w_k, y_k, z_k) &\leq - \frac{\beta_0}{2} \|v_{k+1} - v_k\|_2^2.
\end{align}
As $w_{k+1}$ is the optimal solution to \eqref{eq:w_subprob}, it is straightforward to have 
\begin{gather}
\begin{aligned}
    &\mathcal{L}_{\beta_k}(u_{k+1}, v_{k+1}, w_{k+1}, y_k, z_k) - \mathcal{L}_{\beta_k}(u_{k+1}, v_{k+1}, w_{k}, y_k, z_k)  \leq 0.
\end{aligned}
\end{gather}
Simple calculations by using  \eqref{eq:y_eq}-\eqref{eq:z_eq} lead to
\begin{gather}
\begin{aligned}
    \mathcal{L}_{\beta_k}(u_{k+1}, &v_{k+1}, w_{k+1}, y_{k+1}, z_{k+1}) - \mathcal{L}_{\beta_k}(u_{k+1}, v_{k+1}, w_{k+1}, y_k, z_k)  \\
     =&\left(\mathcal{L}_{\beta_k}(u_{k+1}, v_{k+1}, w_{k+1}, y_{k+1}, z_{k+1}) - \mathcal{L}_{\beta_k}(u_{k+1}, v_{k+1}, w_{k+1}, y_{k+1}, z_{k}) \right) \\
     &+\left(\mathcal{L}_{\beta_k}(u_{k+1}, v_{k+1}, w_{k+1}, y_{k+1}, z_{k}) - \mathcal{L}_{\beta_k}(u_{k+1}, v_{k+1}, w_{k+1}, y_{k}, z_{k}) \right) \\
     = &\langle z_{k+1} - z_k, \nabla u_{k+1} - w_{k+1} \rangle + \langle y_{k+1} - y_k, Au_{k+1} - v_{k+1} \rangle\\
     = & \frac{1}{\beta_k} \left(\|y_{k+1}-y_k\|_2^2 +\|z_{k+1} - z_k\|_2^2 \right).
\end{aligned}
\end{gather}
Lastly, we  have
\begin{gather}
\begin{aligned}\label{eq:ineq4}
    \mathcal{L}_{\beta_{k+1}}(u_{k+1}, &v_{k+1}, w_{k+1}, y_{k+1}, z_{k+1}) -\mathcal{L}_{\beta_k}(u_{k+1}, v_{k+1}, w_{k+1}, y_{k+1}, z_{k+1})\\ =& \frac{\beta_{k+1}- \beta_k}{2} \left( \|Au_{k+1}-v_{k+1}\|_2^2 + \|\nabla u_{k+1} - w_{k+1}\|_2^2 \right) \\
    =& \frac{\beta_{k+1}- \beta_k}{2\beta_k^2} \left(\left\|y_{k+1}-y_k \right\|_2^2 +\left\|z_{k+1}-z_k \right\|_2^2 \right).
\end{aligned}
\end{gather}
Combining \eqref{eq:ineq3} - \eqref{eq:ineq4} together with the fact that $\beta_k = \sigma^k \beta_0$ for $\sigma > 1$, we obtain
\begin{align*}
\mathcal{L}_{\beta_{k+1}}(u_{k+1}, &v_{k+1}, w_{k+1}, y_{k+1}, z_{k+1}) - \mathcal{L}_{\beta_k}(u_k, v_k, w_k, y_k, z_k) \\  
\leq &- \frac{\nu}{2} \|u_{k+1} - u_k\|_2^2- \frac{\beta_0}{2} \|v_{k+1} - v_k\|_2^2 + \frac{\beta_{k+1} + \beta_k}{2 \beta_k^2} \left(\left\|y_{k+1}-y_k \right\|_2^2 +\left\|z_{k+1}-z_k \right\|_2^2 \right)\\
=&-\frac{\nu}{2} \|u_{k+1} - u_k\|_2^2 - \frac{\beta_0}{2} \|v_{k+1} - v_k\|_2^2+ \frac{\sigma+1}{2 \sigma^k \beta_0}\left(\left\|y_{k+1}-y_k \right\|_2^2 +\left\|z_{k+1}-z_k \right\|_2^2 \right) \\
\leq& - \frac{\nu}{2} \|u_{k+1} - u_k\|_2^2 - \frac{\beta_0}{2} \|v_{k+1} - v_k\|_2^2+ \frac{1}{\sigma^{k-1} \beta_0}\left(\left\|y_{k+1}-y_k \right\|_2^2 +\left\|z_{k+1}-z_k \right\|_2^2 \right).
\end{align*}
This completes the proof.
\end{proof}
\begin{lemma}\label{lemma:result}
Suppose that $\text{ker}(A) \cap \text{ker}(\nabla) = \{0\}$ and $0 \leq \alpha < 1$. Let $\{(u_k, v_k, w_k, y_k, z_k)\}_{k=1}^{\infty}$ be generated by Algorithm \ref{alg:admm}. If $\{y_k\}_{k=1}^{\infty}$ bounded, then the sequence  $\{(u_k, v_k, w_k, y_k, z_k)\}_{k=1}^{\infty}$ is bounded,
 $u_{k+1} - u_k \rightarrow 0,$ and $v_{k+1} -v_k \rightarrow 0$. 
\end{lemma}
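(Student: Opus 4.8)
The plan is to telescope the sufficient-decrease inequality of Lemma~\ref{lemma:sufficient_decrease}. Writing $\beta = \beta_1 = \beta_2$ and $\beta_k = \sigma^k\beta_0$ with $\sigma>1$, the right-hand side of \eqref{eq:lagrange_ineq} splits into the nonpositive part $-\tfrac{\nu}{2}\|u_{k+1}-u_k\|_2^2-\tfrac{\beta_0}{2}\|v_{k+1}-v_k\|_2^2$ and the error term $\tfrac{1}{\sigma^{k-1}\beta_0}(\|y_{k+1}-y_k\|_2^2+\|z_{k+1}-z_k\|_2^2)$. The first thing I would do is show that both multiplier sequences are bounded: $\{y_k\}$ is bounded by hypothesis, while $\{z_k\}$ is bounded \emph{for free}. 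Indeed, the optimality condition for the $w$-subproblem \eqref{eq:w_subprob} together with the update \eqref{eq:z_eq} yields $z_{k+1}\in\partial\big(\|\cdot\|_1-\alpha\|\cdot\|_{2,1}\big)(w_{k+1})$, and since $w\mapsto\|w\|_1-\alpha\|w\|_{2,1}$ is globally Lipschitz (a difference of norms), its limiting subdifferential is uniformly bounded; concretely every component of $z_{k+1}$ lies in $[-(1+\alpha),1+\alpha]$, so $\|z_k\|_2\le(1+\alpha)\sqrt{2MN}$. Consequently $\|y_{k+1}-y_k\|_2$ and $\|z_{k+1}-z_k\|_2$ are uniformly bounded, and the factor $\sigma^{-(k-1)}$ makes $\sum_k\tfrac{1}{\sigma^{k-1}\beta_0}(\|y_{k+1}-y_k\|_2^2+\|z_{k+1}-z_k\|_2^2)<\infty$.

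Next I would pin down that $\mathcal{L}_{\beta_k}(u_k,v_k,w_k,y_k,z_k)$ is bounded both above and below. Summing \eqref{eq:lagrange_ineq} and discarding the nonpositive terms gives the upper bound $\mathcal{L}_{\beta_k}\le\mathcal{L}_{\beta_1}(u_1,v_1,w_1,y_1,z_1)+C$, where $C$ is the finite error sum above. For the lower bound I would complete the square in the coupling terms, writing $\langle y_k,Au_k-v_k\rangle+\tfrac{\beta_k}{2}\|Au_k-v_k\|_2^2=\tfrac{\beta_k}{2}\|Au_k-v_k+y_k/\beta_k\|_2^2-\tfrac{1}{2\beta_k}\|y_k\|_2^2$ and similarly for $(z_k,\nabla u_k-w_k)$. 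Because $\|w_k\|_1-\alpha\|w_k\|_{2,1}\ge 0$, $\tfrac{\mu}{2}\|\nabla u_k\|_2^2\ge 0$, and $v-f\log v$ is bounded below pointwise (as in the proof of Theorem~\ref{thm:solution_exist}), the only possibly-negative contributions are $-\tfrac{1}{2\beta_k}(\|y_k\|_2^2+\|z_k\|_2^2)$, which are bounded below using $\beta_k\ge\beta_0$ and the uniform bounds on $\{y_k\},\{z_k\}$. Hence $\mathcal{L}_{\beta_k}$ is uniformly bounded below as well.

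With the Lagrangian trapped between two constants, I would extract the boundedness of each primal sequence from its completed-square form and the \emph{upper} bound on $\mathcal{L}_{\beta_k}$. Since $\|w_k\|_1-\alpha\|w_k\|_{2,1}\ge(1-\alpha)\|w_k\|_1$ with $1-\alpha>0$, the quantity $\|w_k\|_1$ is bounded, so $\{w_k\}$ is bounded; and since $\langle v_k-f\log v_k,\mathbbm{1}\rangle$ is bounded above while $v\mapsto v-f\log v$ tends to $+\infty$ as $v\to\infty$ (using $\|f\|_\infty<\infty$), $\{v_k\}$ is bounded. From the updates \eqref{eq:y_eq}--\eqref{eq:z_eq} we have $Au_{k+1}-v_{k+1}=(y_{k+1}-y_k)/\beta_k$ and $\nabla u_{k+1}-w_{k+1}=(z_{k+1}-z_k)/\beta_k$; since the numerators are uniformly bounded and $\beta_k\to\infty$, both $Au_k-v_k$ and $\nabla u_k-w_k$ tend to $0$, hence are bounded. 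Combined with boundedness of $\{v_k\}$ and $\{w_k\}$, this makes $\{Au_k\}$ and $\{\nabla u_k\}$ bounded, and Proposition~\ref{prop:bound} (using $\text{ker}(A)\cap\text{ker}(\nabla)=\{0\}$) then yields boundedness of $\{u_k\}$. Finally, telescoping \eqref{eq:lagrange_ineq} and invoking the two-sided bound on $\mathcal{L}_{\beta_k}$ gives $\sum_k\big(\tfrac{\nu}{2}\|u_{k+1}-u_k\|_2^2+\tfrac{\beta_0}{2}\|v_{k+1}-v_k\|_2^2\big)<\infty$, whence $u_{k+1}-u_k\to 0$ and $v_{k+1}-v_k\to 0$.

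The step I expect to be the main obstacle is establishing the uniform two-sided control of $\mathcal{L}_{\beta_k}$ \emph{in spite of} $\beta_k\to\infty$: the naive lower bound degrades like $-\|y_k\|_2^2/\beta_k$, so one must simultaneously exploit (i) the cost-free boundedness of $\{z_k\}$ via the bounded subdifferential of $\|\cdot\|_1-\alpha\|\cdot\|_{2,1}$, (ii) the hypothesis that $\{y_k\}$ is bounded, and (iii) the growth of $\beta_k$, which both neutralizes the $-\tfrac{1}{2\beta_k}\|\cdot\|_2^2$ terms and renders the error sum summable. Once this is in place, boundedness of the primal variables and vanishing of the successive differences follow by coercivity of the Poisson fidelity and Proposition~\ref{prop:bound} in a relatively routine manner.
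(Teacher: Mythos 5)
Your proposal is correct and follows essentially the same route as the paper's proof: bounding $\{z_k\}$ for free via the uniformly bounded subdifferential of $\|\cdot\|_1-\alpha\|\cdot\|_{2,1}$, telescoping the descent inequality of Lemma~\ref{lemma:sufficient_decrease} with the geometrically decaying error term, completing squares in the augmented Lagrangian to extract boundedness of $\{w_k\}$ and $\{v_k\}$ from the $(1-\alpha)\|w_k\|_1$ term and the coercivity of the Poisson fidelity, then passing through the multiplier updates and Proposition~\ref{prop:bound} to bound $\{u_k\}$, and finally reading off the summability of $\|u_{k+1}-u_k\|_2^2$ and $\|v_{k+1}-v_k\|_2^2$. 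The only cosmetic difference is that you justify the bound on $\{z_k\}$ via global Lipschitzness of the regularizer rather than the explicit subgradient formulas \eqref{eq:l1_subgrad}--\eqref{eq:l2_subgrad}, which yields the same constant $1+\alpha$.
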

\begin{proof}
First we show that $\{z_k\}_{k=1}^{\infty}$ is bounded. Combining \eqref{eq:z_eq} with the first-order optimality condition of \eqref{eq:pixel_w_subprob}, we have
\begin{gather}
\begin{aligned} \label{eq:w_first_order_opt}
    (z_{k+1})_{i,j} =(z_k)_{i,j} + \beta_k\left( (\nabla u_{k+1})_{i,j} - (w_{k+1})_{i,j}\right) &\in \partial\left( \|(w_{k+1})_{i,j}\|_1-\alpha\|(w_{k+1})_{i,j}\|_2\right)\\
    &\subseteq \partial\left( \|(w_{k+1})_{i,j}\|_1\right)-\alpha\partial \left(\|(w_{k+1})_{i,j}\|_2\right),
\end{aligned}
\end{gather}
which implies that there exist $\xi_1 \in \partial\|(w_{k+1})_{i,j}\|_1$ and $\xi_2 \in \partial \|(w_{k+1})_{i,j}\|_2$ such that $(z_{k+1})_{i,j} = \xi_1 -\alpha \xi_2$ for each $(i,j) \in \Omega$. 
Recall that for $x \in \mathbb{R}^2$ the subgradients of the two norms are 
\begin{align} \label{eq:l1_subgrad}
   &     \partial\|x\|_1 = \left\{
    \xi\in \mathbb{R}^2 : \xi_i = \begin{cases}
    \text{sign}(x_i) &\text{ if } x_i \neq 0 \\
    \xi_i \in [-1,1] &\text{ if } x_i = 0
    \end{cases}\; \text{ for } i =1,2
    \right\},\\
 \label{eq:l2_subgrad}
   & \partial \|x\|_{2} = \left\{
    \xi \in \mathbb{R}^2: \xi = \begin{cases}
    \frac{x}{\|x\|_2} &\text{ if } x \neq 0 \\
    \in \{ \xi \in \mathbb{R}^2: \|\xi\|_2 \leq 1\} &\text{ if } x = 0
    \end{cases} \right\}.
\end{align}
Therefore, we have $\|\xi_1\|_{\infty} \leq 1, \|\xi_2\|_{\infty} \leq 1$, and hence $\|(z_{k+1})_{i,j}\|_{\infty} \leq 1+ \alpha$
(by the triangle inequality), i.e., $\{z_k\}_{k=1}^{\infty}$ is bounded.

By the assumption $\{(y_k)\}_{k=1}^{\infty}$ is bounded. There exist two constants $C_1, C_2 > 0$ such that $\|y_{k+1} - y_k\|_2^2 \leq C_1$, $\|z_{k+1}-  z_k\|_2^2 \leq C_1$, $\|y_k\|_2^2 \leq C_2$, and $\|z_k\|_2^2 \leq C_2$ for all $k \in \mathbb{N}$. Hence,  we have from \eqref{eq:lagrange_ineq} that
\begin{gather}
\begin{aligned}\label{ineq:descent}
\mathcal{L}_{\beta_{k+1}}(u_{k+1}, v_{k+1}, w_{k+1}, y_{k+1}, z_{k+1}) \leq& \mathcal{L}_{\beta_k}(u_k, v_k, w_k, y_k, z_k) - \frac{\nu}{2} \|u_{k+1} - u_k\|_2^2\\ &- \frac{\beta_0}{2} \|v_{k+1} - v_k\|_2^2+ \frac{2C_1}{\sigma^{k-1} \beta_0}.
\end{aligned}
\end{gather}
A telescoping summation of \eqref{ineq:descent}  leads to 
\begin{gather}
\begin{aligned}\label{eq:lagrange_upper}
    \mathcal{L}_{\beta_{k+1}}(u_{k+1}, v_{k+1}, w_{k+1}, y_{k+1}, z_{k+1}) \leq & \mathcal{L}_{\beta_{0}}(u_{0}, v_{0}, w_{0}, y_{0}, z_{0})+ \frac{2C_1}{\beta_0}\sum_{i=0}^{k}\frac{1}{\sigma^{i-1}}\\  &- \frac{\nu}{2}\sum_{i=0}^{k} \|u_{i+1} - u_i\|_2^2-\frac{\beta_0}{2}\sum_{i=0}^k  \|v_{i+1} - v_i\|_2^2.
\end{aligned}
\end{gather}
By completing two least-squares terms, we can rewrite $\mathcal{L}_{\beta_{k+1}}$ as 
\begin{gather}
\begin{aligned}
\label{eq:lagrange_rewrite}
    \mathcal{L}_{\beta_{k+1}}(u_{k+1}, v_{k+1}, w_{k+1}, y_{k+1}, z_{k+1}) 
    =  &\lambda \langle v_{k+1} - f \log v_{k+1}, \mathbbm{1} \rangle + \frac{\mu}{2} \|\nabla u_{k+1}\|_2^2 + \|w_{k+1} \|_1 - \alpha  \|w_{k+1}\|_{2,1}\\ &+ \frac{\beta_{k+1}}{2} \left \|Au_{k+1} - v_{k+1} + \frac{y_{k+1}}{\beta_{k+1}}\right\|_2^2 - \frac{\|y_{k+1}\|_2^2}{2\beta_{k+1}}\\
    &+ \frac{\beta_{k+1}}{2} \left\|\nabla u_{k+1} - w_{k+1} + \frac{z_{k+1}}{\beta_{k+1}} \right\|_2^2 - \frac{\|z_{k+1}\|_2^2}{2 \beta_{k+1}}.
    \end{aligned}
    \end{gather}
Combining \eqref{eq:lagrange_upper} and \eqref{eq:lagrange_rewrite}, we have
\begin{gather}
\begin{aligned} \label{eq:complex_ineq}
    \lambda \langle f - f \log f, \mathbbm{1} \rangle + (1-\alpha) \|w_{k+1}\|_1 - \frac{C_2}{\beta_{0}}  \leq &\mathcal{L}_{\beta_{k+1}}(u_{k+1}, v_{k+1}, w_{k+1}, y_{k+1}, z_{k+1})\\
    \leq &\mathcal{L}_{\beta_{0}}(u_{0}, v_{0}, w_{0}, y_{0}, z_{0})+ \frac{2C_1}{\beta_0}\sum_{i=0}^{k}\frac{1}{\sigma^{i-1}}\\  &- \frac{\nu}{2}\sum_{i=0}^{k} \|u_{i+1} - u_i\|_2^2-\frac{\beta_0}{2}\sum_{i=0}^k  \|v_{i+1} - v_i\|_2^2\\
    \leq & \mathcal{L}_{\beta_{0}}(u_{0}, v_{0}, w_{0}, y_{0}, z_{0})+ \frac{2C_1}{\beta_0}\sum_{i=0}^{\infty}\frac{1}{\sigma^{i-1}}.
\end{aligned}
\end{gather}
Since $\sigma > 1$,  the infinite sum is finite, and hence we have $\forall k \in \mathbb{N},$
\begin{align*}
    \|w_{k+1}\|_1 \leq \frac{1}{1-\alpha}\left(  \mathcal{L}_{\beta_{0}}(u_{0}, v_{0}, w_{0}, y_{0}, z_{0})-\lambda \langle f - f \log f, \mathbbm{1} \rangle+ \frac{2C_1}{\beta_0}\sum_{i=0}^{\infty}\frac{1}{\sigma^{i-1}} + \frac{C_2}{\beta_0} \right) < \infty,
\end{align*}
which implies that $\{w_k\}_{k=1}^{\infty}$ is bounded. Also from \eqref{eq:lagrange_upper} and \eqref{eq:lagrange_rewrite}, we have
\begin{align*}
    \lambda \langle f - f \log f, \mathbbm{1} \rangle - \frac{C_2}{\beta_0}&\leq \lambda \langle v_{k+1} - f \log v_{k+1}, \mathbbm{1} \rangle - \frac{C_2}{\beta_0}\\ &\leq \lambda \langle v_{k+1} - f \log v_{k+1}, \mathbbm{1} \rangle - \frac{\|y_{k+1}\|_2^2}{2 \beta_{k+1}}- \frac{\|z_{k+1}\|_2^2}{2 \beta_{k+1}}\\
    &\leq \mathcal{L}_{\beta_{k+1}}(u_{k+1}, v_{k+1}, w_{k+1}, y_{k+1}, z_{k+1}) \\
    &\leq \mathcal{L}_{\beta_{0}}(u_{0}, v_{0}, w_{0}, y_{0}, z_{0})+ \frac{2C_1}{\beta_0}\sum_{i=0}^{\infty}\frac{1}{\sigma^{i-1}} < \infty.
\end{align*}
This shows that $\{\langle v_k - f\log v_k, \mathbbm{1} \rangle\}_{k=1}^{\infty}$ is bounded. By emulating the computation in \eqref{eq:au_bound}, it can be shown that $\{v_k\}_{k=1}^{\infty}$ is bounded. 

It suffices to prove that $\{(Au_k, \nabla u_k )\}_{k=1}^{\infty}$ is bounded in order to  prove the boundedness of $\{u_k\}_{k=1}^{\infty}$ by Proposition \ref{prop:bound}. Using \eqref{eq:y_eq}, we have
\begin{align*}
    \|Au_{k+1}\|_2 \leq \frac{\|y_{k+1} -y_k\|_2}{\beta_k} + \|v_{k+1}\|_2 \leq \frac{\sqrt{C_1}}{\beta_0}+ \|v_{k+1}\|_2.
\end{align*}
As $\{v_{k}\}_{k=1}^{\infty}$ is proven to be bounded, then $\{Au_{k}\}_{k=1}^{\infty}$ is also bounded. We can prove $\{\nabla u_k\}_{k=1}^{\infty}$ is bounded similarly using \eqref{eq:z_eq}. Altogether,  $\{(u_k, v_k, w_k, y_k, z_k)\}_{k=1}^{\infty}$ is bounded.

It follows from \eqref{eq:complex_ineq} that
\begin{align*}
    \frac{\nu}{2} \sum_{i=0}^k \|u_{i+1} -u_i\|_2^2 +\frac{\beta_0}{2}\sum_{i=0}^k  \|v_{i+1} - v_i\|_2^2
   \leq  &\mathcal{L}_{\beta_{0}}(u_{0}, v_{0}, w_{0}, y_{0}, z_{0})\\&+ \frac{2C_1}{\beta_0}\sum_{i=0}^{k}\frac{1}{\sigma^{i-1}}-\lambda \langle f - f \log f, \mathbbm{1} \rangle+\frac{C_2}{\beta_0}. 
\end{align*}
As $k \rightarrow \infty$, we 
see the right-hand side is finite, which forces the infinite summations on the left-hand side to converge, and hence we have $u_{k+1} - u_k \rightarrow 0$ and $v_{k+1} -v_k \rightarrow 0$. 
\end{proof}

\begin{mythm}\label{thm:convergence_result}
Suppose that $\text{ker}(A) \cap \text{ker}(\nabla) = \{0\}$ and $0 \leq \alpha < 1$. Let $\{(u_k, v_k, w_k, y_k, z_k)\}_{k=1}^{\infty}$ be generated by Algorithm \ref{alg:admm}. If $\{y_k\}_{k=1}^{\infty}$ bounded,  $\beta_k(v_{k+1}-v_k) \rightarrow 0, \beta_k (w_{k+1} - w_k) \rightarrow 0, y_{k+1} - y_k \rightarrow 0$, and $z_{k+1} - z_k \rightarrow 0$, then there exists a subsequence  whose limit point $(u^*, v^*, w^*, y^*, z^*)$ is a  stationary point of \eqref{eq:constrained_opt} that satisfies the following:
\begin{subequations}
\begin{align}
    0 &=-\mu \Delta u^* + A^{\top}y^* + \nabla^{\top} z^*, \\
    0&=\lambda \left( \mathbbm{1}- \frac{f}{v^*} \right)- y^*,\\
     z^* &\in \partial \left( \|w^{*}\|_1 - \alpha \|w^{*}\|_{2,1} \right),\\
     Au^* &= v^*, \\
     \nabla u^* &= w^*.
\end{align}
\end{subequations}
\end{mythm}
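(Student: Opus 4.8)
The plan is to extract a convergent subsequence from the bounded sequence guaranteed by Lemma~\ref{lemma:result} and pass to the limit in the first-order optimality conditions of each ADMM subproblem. By Lemma~\ref{lemma:result}, the full sequence $\{(u_k,v_k,w_k,y_k,z_k)\}_{k=1}^\infty$ is bounded, so by the Bolzano--Weierstrass theorem there is a subsequence $\{(u_{k_n},v_{k_n},w_{k_n},y_{k_n},z_{k_n})\}$ converging to some $(u^*,v^*,w^*,y^*,z^*)$. Because $u_{k+1}-u_k\to 0$ and $v_{k+1}-v_k\to 0$ by Lemma~\ref{lemma:result}, and $w_{k+1}-w_k\to 0$ (which follows from the hypothesis $\beta_k(w_{k+1}-w_k)\to 0$ together with $\beta_k\geq\beta_0$), the shifted iterates at index $k_n+1$ share the same limit as those at $k_n$; likewise $y_{k_n+1}\to y^*$ and $z_{k_n+1}\to z^*$ since $y_{k+1}-y_k\to 0$ and $z_{k+1}-z_k\to 0$. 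This is what allows me to evaluate optimality conditions stated at index $k+1$ along the subsequence.

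First I would establish the feasibility conditions (d) and (e). From \eqref{eq:y_eq} and \eqref{eq:z_eq}, $Au_{k+1}-v_{k+1}=(y_{k+1}-y_k)/\beta_k$ and $\nabla u_{k+1}-w_{k+1}=(z_{k+1}-z_k)/\beta_k$; since $\beta_k\geq\beta_0$ while $y_{k+1}-y_k\to 0$ and $z_{k+1}-z_k\to 0$, both residuals vanish, and taking limits along the subsequence yields $Au^*=v^*$ and $\nabla u^*=w^*$. Next I would prove $v^*>0$ componentwise, which is needed for condition (b) to even be meaningful: Lemma~\ref{lemma:result} shows $\{\langle v_k-f\log v_k,\mathbbm{1}\rangle\}$ is bounded above, so if some entry of $v_{k_n}$ tended to $0$, the corresponding term $-f_{i,j}\log (v_{k_n})_{i,j}$ (recall $f_{i,j}>0$) would blow up while all remaining terms stay bounded below, a contradiction. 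Hence $f/v^*$ is well-defined and continuous at $v^*$.

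I would then pass to the limit in the subproblem optimality conditions. For condition (a), I start from the $u$-update optimality \eqref{eq:first_order_u} and rewrite $\beta_{1,k}v_k-y_k$ and $z_k-\beta_{2,k}w_k$ via \eqref{eq:y_eq}--\eqref{eq:z_eq}, so that the condition reads $-\mu\Delta u_{k+1}+A^\top\big(y_{k+1}+\beta_k(v_{k+1}-v_k)\big)+\nabla^\top\big(z_{k+1}+\beta_k(w_{k+1}-w_k)\big)=0$; the hypotheses $\beta_k(v_{k+1}-v_k)\to 0$ and $\beta_k(w_{k+1}-w_k)\to 0$ annihilate the extra terms in the limit, giving $-\mu\Delta u^*+A^\top y^*+\nabla^\top z^*=0$. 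For condition (b), the $v$-update optimality simplifies, after substituting \eqref{eq:y_eq}, to $\lambda(\mathbbm{1}-f/v_{k+1})-y_{k+1}=0$, which passes to $\lambda(\mathbbm{1}-f/v^*)-y^*=0$ using $v^*>0$ and $y_{k_n+1}\to y^*$.

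The step I expect to be the main obstacle is condition (c), the subdifferential inclusion for the nonconvex AITV term. From \eqref{eq:w_first_order_opt}, the pixelwise optimality gives $z_{k+1}\in\partial\big(\|w_{k+1}\|_1-\alpha\|w_{k+1}\|_{2,1}\big)$. To pass to the limit I would invoke the closedness of the limiting subdifferential stated after the definition: since $w_{k_n+1}\to w^*$, $z_{k_n+1}\to z^*$, and the difference-of-convex function $\|\cdot\|_1-\alpha\|\cdot\|_{2,1}$ is continuous (so its values along the subsequence converge to its value at $w^*$), closedness yields $z^*\in\partial\big(\|w^*\|_1-\alpha\|w^*\|_{2,1}\big)$. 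The delicate points here are verifying that the convergence hypotheses of the closedness property genuinely hold at index $k_n+1$ (handled by the vanishing consecutive differences established above) and that the nonconvexity does not obstruct the passage to the limit — this is precisely why the \emph{limiting} subdifferential, rather than the convex subdifferential, must be used throughout.
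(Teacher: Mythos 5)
Your proposal is correct and follows essentially the same route as the paper: extract a convergent subsequence via Lemma~\ref{lemma:result}, obtain feasibility from the dual updates, substitute \eqref{eq:y_eq}--\eqref{eq:z_eq} into the subproblem optimality conditions and pass to the limit using the stated hypotheses, and invoke closedness of the limiting subdifferential for the AITV term. The only (harmless) deviations are that you deduce $w_{k+1}-w_k\to 0$ directly from $\beta_k(w_{k+1}-w_k)\to 0$ and $\beta_k\geq\beta_0$ rather than via the paper's triangle-inequality argument, and you add an explicit verification that $v^*>0$, which the paper leaves implicit.
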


\begin{proof}
By Lemma \ref{lemma:result}, the sequence $\{(u_k, v_k, w_k, y_k, z_k)\}_{k=1}^{\infty}$ is bounded, so there exists a subsequence $\{(u_{k_n}, v_{k_n}, w_{k_n}, y_{k_n}, z_{k_n})\}_{n=1}^{\infty}$ that converges to a point $(u^*, v^*, w^*, y^*, z^*)$. Additionally, we have $u_{k+1} - u_k \rightarrow 0$ and $v_{k+1}-v_k \rightarrow 0$.  Since $\{(y_{k}, z_{k})\}_{k=1}^{\infty}$ is bounded, there exists a constant $C > 0$ such that $\|y_{k+1} -y_k\|_2 < C$ and $\|z_{k+1} -z_k\|_2 < C$ for each $k \in \mathbb{N}$. By \eqref{eq:z_eq}, we have
\begin{align*}
    \|w_{k+1} - w_k \|_2 &\leq \|w_{k+1} - \nabla u_{k+1}\|_2 + \|\nabla u_{k+1} - \nabla u_k \|_2 + \|\nabla u_k - w_k\|_2\\ &= \frac{\|z_{k+1} - z_k\|_2}{\beta_k} + \|\nabla u_{k+1} - \nabla u_k \|_2 + \frac{\|z_{k} - z_{k-1}\|_2}{\beta_{k-1}}\\ &\leq \frac{2C}{\beta_{k-1}} + \|\nabla u_{k+1} - \nabla u_k \|_2.
\end{align*}
As $k\rightarrow \infty$, we have $w_{k+1} - w_k \rightarrow 0$. Altogether, we can derive the following results:
\begin{align}
    \lim_{n \rightarrow \infty} (u_{k_n+1}, v_{k_n+1}, w_{k_n+1}) = \lim_{n \rightarrow \infty} (u_{k_n}, v_{k_n}, w_{k_n}) = (u^*, v^*, w^*).
\end{align}
Furthermore, the assumptions give us
\begin{align*}
    \lim_{n \rightarrow \infty} \beta_{k_n} (v_{k_n+1}-v_{k_n}) &= 0,\\
    \lim_{n \rightarrow \infty} \beta_{k_n} (w_{k_n+1}-w_{k_n}) &= 0,\\
    \lim_{n \rightarrow \infty} y_{k_n+1}-y_{k_n} &=0,\\
    \lim_{n \rightarrow \infty} z_{k_n+1} - z_{k_n} &= 0.
\end{align*}
 By \eqref{eq:y_eq}-\eqref{eq:z_eq}, we have
\begin{align*}
     \|Au^* - v^*\|_2 &= \lim_{n \rightarrow \infty} \|Au_{k_n+1} - v_{k_n+1}\|_2 = \lim_{n \rightarrow \infty}\frac{\|y_{k_n+1} - y_{k_n}\|_2}{\beta_{k_n}} \leq \lim_{n \rightarrow \infty}\frac{C}{\beta_{k_n}} =0, \\
     \|\nabla u^* - w^* \|_2 &= \lim_{n \rightarrow \infty} \|\nabla u_{k_n+1} - w_{k_n+1}\|_2 =\lim_{n \rightarrow \infty} \frac{\|z_{k+1} - z_k\|_2}{\beta_{k_n}} \leq\lim_{n \rightarrow \infty} \frac{C}{\beta_{k_n}} = 0.
\end{align*}
Hence, we have $Au^* = v^*$ and $\nabla u^* = w^*$. 

The optimality conditions at iteration $k_n$ are the following:
\begin{subequations}\label{eq:limit_argument}
\begin{align}
    &-\mu \Delta u_{k_n+1}+A^{\top}y_{k_n} + \beta_{k_n} A^{\top} (Au_{k_n+1} -v_{k_n}) + \nabla^{\top}z_{k_n} + \beta_{k_n} \nabla^{\top}(\nabla u_{k_n+1}-w_{k_n})=0,\label{eq:k_n_first} \\
    &\lambda \left( \mathbbm{1} - \frac{f}{v_{k_n+1}} \right) - y_{k_n} - \beta_{k_n} (Au_{k_n+1}-v_{k_n+1})=0, \label{eq:k_n_second}\\
    &z_{k_n} + \beta_{k_n}(\nabla u_{k_n+1}- w_{k_n+1}) \in \partial(\|w_{k_n+1}\|_1 - \alpha \|w_{k_n+1}\|_{2,1}). \label{eq:k_n_third}
    \end{align}
\end{subequations}
Expanding \eqref{eq:k_n_first} by substituting in \eqref{eq:y_eq}-\eqref{eq:z_eq} and taking the limit, we have
\begin{align*}
    0 =& \lim_{n \rightarrow \infty} -\mu \Delta u_{k_n+1}+A^{\top}y_{k_n} + \beta_{k_n} A^{\top} (Au_{k_n+1} -v_{k_n}) + \nabla^{\top}z_{k_n} + \beta_{k_n} \nabla^{\top}(\nabla u_{k_n+1}-w_{k_n})\\
    =&\lim_{n \rightarrow \infty} -\mu \Delta u_{k_n+1}+A^{\top}y_{k_n} + \beta_{k_n} A^{\top} (Au_{k_n+1} -v_{k_n+1}) + \beta_{k_n} A^{\top} (v_{k_n+1} -v_{k_n}) + \nabla^{\top}z_{k_n}\\ &+ \beta_{k_n} \nabla^{\top}(\nabla u_{k_n+1}-w_{k_n+1})+\beta_{k_n} \nabla^{\top}( w_{k_n+1}-w_{k_n})\\
    =&\lim_{n \rightarrow \infty} -\mu \Delta u_{k_n+1}+A^{\top}y_{k_n} + A^{\top} (y_{k_n+1} -y_{k_n}) + \beta_{k_n} A^{\top} (v_{k_n+1} -v_{k_n}) + \nabla^{\top}z_{k_n}\\ &+  \nabla^{\top}( z_{k_n+1}-z_{k_n})+\beta_{k_n} \nabla^{\top}( w_{k_n+1}-w_{k_n})\\
    =& -\mu\Delta u^* + A^{\top} y^* + \nabla^{\top} z^*.
\end{align*}
Substituting in \eqref{eq:y_eq} into \eqref{eq:k_n_second} and taking the limit give us
\begin{align*}
    0 =& \lim_{n \rightarrow \infty} \lambda \left( \mathbbm{1} - \frac{f}{v_{k_n+1}} \right) - y_{k_n} - \beta_{k_n} (Au_{k_n+1}-v_{k_n+1})\\ =& \lim_{n \rightarrow \infty} \lambda \left( \mathbbm{1} - \frac{f}{v_{k_n+1}} \right) - y_{k_n} - (y_{k_n+1} -y_{k_n})\\ =& \lambda \left( \mathbbm{1}- \frac{f}{v^*} \right)- y^*.
\end{align*}
Lastly, by substituting \eqref{eq:z_eq} into \eqref{eq:k_n_third}, we have
\begin{align*}
    z_{k_n+1} \in \partial(\|w_{k_n+1}\|_1 - \alpha \|w_{k_n+1}\|_{2,1}).
\end{align*}
By continuity, we have $\|w_{k_n+1}\|_1 - \alpha \|w_{k_n+1}\|_{2,1} \rightarrow \|w^*\|_1 - \alpha \|w^*\|_{2,1}$. Together with the fact that $(w_{k_n+1} , z_{k_n+1}) \rightarrow (w^*, z^*)$, we have $ z^* \in \partial \left( \|w^{*}\|_1 - \alpha \|w^{*}\|_{2,1} \right)$ by closedness of the subdifferential. 

Therefore, $(u^*, v^*, w^*, y^*, z^*)$ is a stationary point.
\end{proof}
\begin{remark}
It is true that the assumptions in Theorem \ref{thm:convergence_result} are rather strong, but they are standard in 
 the convergence analyses of other ADMM algorithms for nonconvex problems that fail to satisfy the conditions for global convergence in \cite{wang2019global}.
For example, \cite{jung2017piecewise, jung2014variational,  li2016multiphase, li2020tv} assumed convergence of the successive differences of the primal variables and Lagrange multipliers. Instead, we modify the convergence of the successive difference of the primal variables, i.e., $\beta_k(v_{k+1}-v_k) \rightarrow 0, \beta_k (w_{k+1} - w_k) \rightarrow 0$.
Boundedness of the Lagrange multiplier (i.e., $\{y_k\}_{k=1}^{\infty}$) was also assumed in \cite{liu2022entropy, xu2012alternating}, which required a stronger assumption than ours regarding the successive difference of the Lagrange multipliers. 
\end{remark}
\section{Numerical Experiments} \label{sec:experiment}

In this section, we apply the proposed method of AITV Poisson SaT/SLaT on various grayscale and color images for image segmentation. For grayscale images, we compare our method with the original TV SaT \cite{chan2014two}, thresholded-Rudin-Osher-Fatemi (T-ROF) \cite{cai2019linkage}, and the Potts model \cite{potts1952some} solved by either Pock's algorithm (Pock) \cite{pock2009convex}  or Storath and Weinmann's algorithm (Storath) \cite{storath2014fast}. For color images, we compare   with TV SLaT \cite{cai2017three}, Pock's method \cite{pock2009convex}, and Storath's method \cite{storath2014fast}. We can solve \eqref{eq:ms_poisson_discrete} for TV SaT/SLaT via Algorithm \ref{alg:admm} that utilizes the proximal operator corresponding to the $\|\cdot\|_{2,1}$ norm. The code for T-ROF is provided by the respective author\footnote{\url{https://xiaohaocai.netlify.app/download/}} and we can   adapt it to handle blur by using a more general data fidelity term. Pock's method is implemented by the lab group\footnote{Python code is available at \url{https://github.com/VLOGroup/pgmo-lecture/blob/master/notebooks/tv-potts.ipynb} and a translated MATLAB code is available at \url{https://github.com/kbui1993/MATLAB_Potts}.}. Storath's method is provided by the original author\footnote{ \url{https://github.com/mstorath/Pottslab}}. Note that T-ROF, Pock's method, and Storath's method are designed for images corrupted with Gaussian noise. We apply the Anscombe transform \cite{anscombe1948transformation} to the test images, after which the Poisson noise becomes approximately Gaussian noise. Since Storath's method is not for segmentation, we perform a post-processing step of $k$-means clustering to its  piecewise-constant output. For the SLaT methods, we 
parallelize the smoothing step separately  for each channel. 

\begin{figure*}[t!]
    \centering
    \includegraphics[scale=1.3]{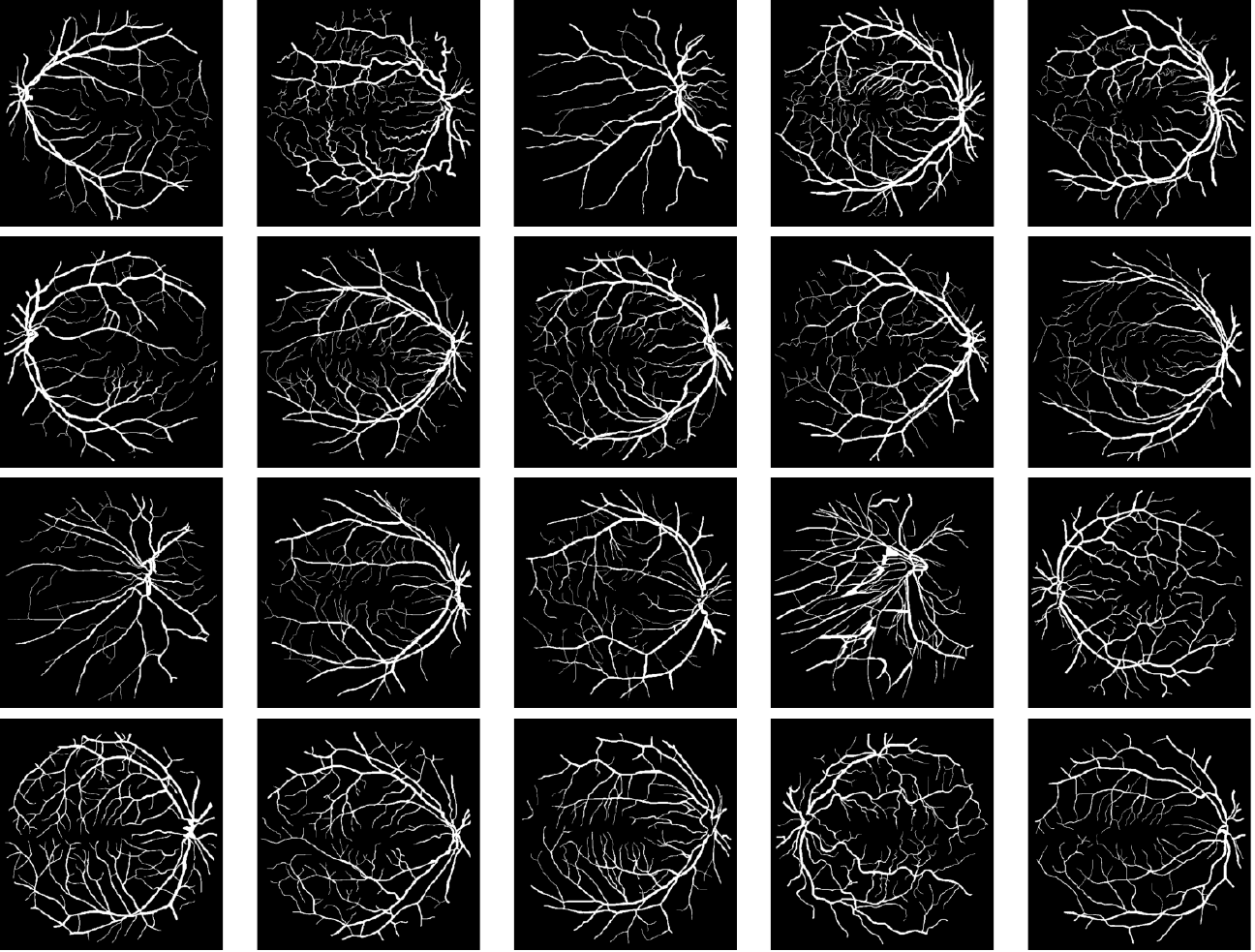}
    \caption{The entire DRIVE dataset \mbox{\cite{staal2004ridge}} for binary segmentation. The image size is $584 \times 565$ with background value of 200 and the pixel value for vessels to be 255.}
    \label{fig:retinex_test}
\end{figure*}

To quantitatively measure the segmentation performance, we use the DICE index \cite{dice1945measures} and peak signal-to-noise ratio (PSNR). Let $S \subset \Omega$ be the ground-truth region and $S' \subset \Omega$ be a region obtained from the segmentation algorithm corresponding to the ground-truth region $S$. The DICE index is formulated by
\begin{align*}
\text{DICE} = \frac{2 |S \cap S'|}{|S|+|S'|}.
\end{align*}
To compare the piecewise-constant reconstruction $\tilde{f}$ according to \eqref{eq:pc_constant_approx} with the original test image $f$, we compute PSNR by 
\begin{align*}
\text{PSNR} = 20 \log_{10} \frac{(M \times N) \times P}{\sum_{i,j} (f_{i,j} - \tilde{f}_{i,j})^2}, 
\end{align*}
where $M \times N$ is the image size and $P = \max_{i,j} f_{i,j}$.

Poisson noise is added to the test images by the MATLAB command \texttt{poissrnd}.
To ease parameter tuning, we scale each test image to $[0,1]$ after its degradation with Poisson noise and/or blur. 
We set $\sigma = 1.25$ and $\beta_{1,0}=\beta_{2,0} = 1.0,2.0$ in Algorithm \ref{alg:admm} for grayscale and color images, respectively. The stopping criterion is either 300 iterations or when the relative error of $u_k$ is below $\epsilon = 10^{-4}$. We tune
the fidelity parameter $\lambda$ and the smoothing parameter $\mu$
for each image,  which will be specified later. For T-ROF, Pock's method, and Storath's method, their parameters are manually tuned to give the best DICE indices for binary segmentation (Section \ref{sec:binary}) and the PSNR values for multiphase segmentation (Section \ref{sec:multi}-\ref{sec:color}). 
All experiments are performed in MATLAB R2022b on a Dell laptop with a 1.80 GHz Intel Core i7-8565U processor and 16.0 GB RAM.

\subsection{Grayscale, Binary Segmentation} \label{sec:binary}

We start with performing binary segmentation on the entire DRIVE dataset \mbox{\cite{staal2004ridge}} that consists of 20  images shown in Figure \mbox{\ref{fig:retinex_test}}. Each image has size $584 \times 565$ with modified pixel values of either 200 for the background or 255 for the vessels. Before adding Poisson noise, we set the peak value of the image to be $P/2$ or $P/5$, where $P=255$. Note that a lower peak value indicates  stronger noise in the image, thus more challenging for denoising. We examine three cases: (1) $P/2$ no blur, (2) $P/5$ no blur, and (3) $P/2$ with Gaussian blur specified by MATLAB command \texttt{fspecial('gaussian', [10 10], 2)}. For the TV SaT method, we set $\lambda = 14.5, \ \mu = 0.5$ for case (1), $\lambda = 8.0, \ \mu = 0.5$ for case (2), and $\lambda =22.5, \ \mu = 0.25$ for case (3). For the AITV SaT method, the parameters $\lambda$ and $\mu$ are set the same as TV SaT, and we have $\alpha = 0.3$ for cases (1)-(2) and $\alpha = 0.8$ for case (3). 

Table \mbox{\ref{tab:new_binary_dice}} records the DICE indices and the computational time in seconds for the competing methods, averaged over 20 images. We observe that AITV SaT attains the best DICE indices for all three cases with comparable computational time to TV SaT and T-ROF, all of which are much faster than Pock and Storath. As visually illustrated in Figure~\mbox{\ref{fig:p_2_blur}}, AITV SaT segments more of the thinner vessels compared to TV SaT and T-ROF in five images, thereby having the higher average DICE indices.

\begin{figure}[t]
    \centering
    \includegraphics[scale=1.3]{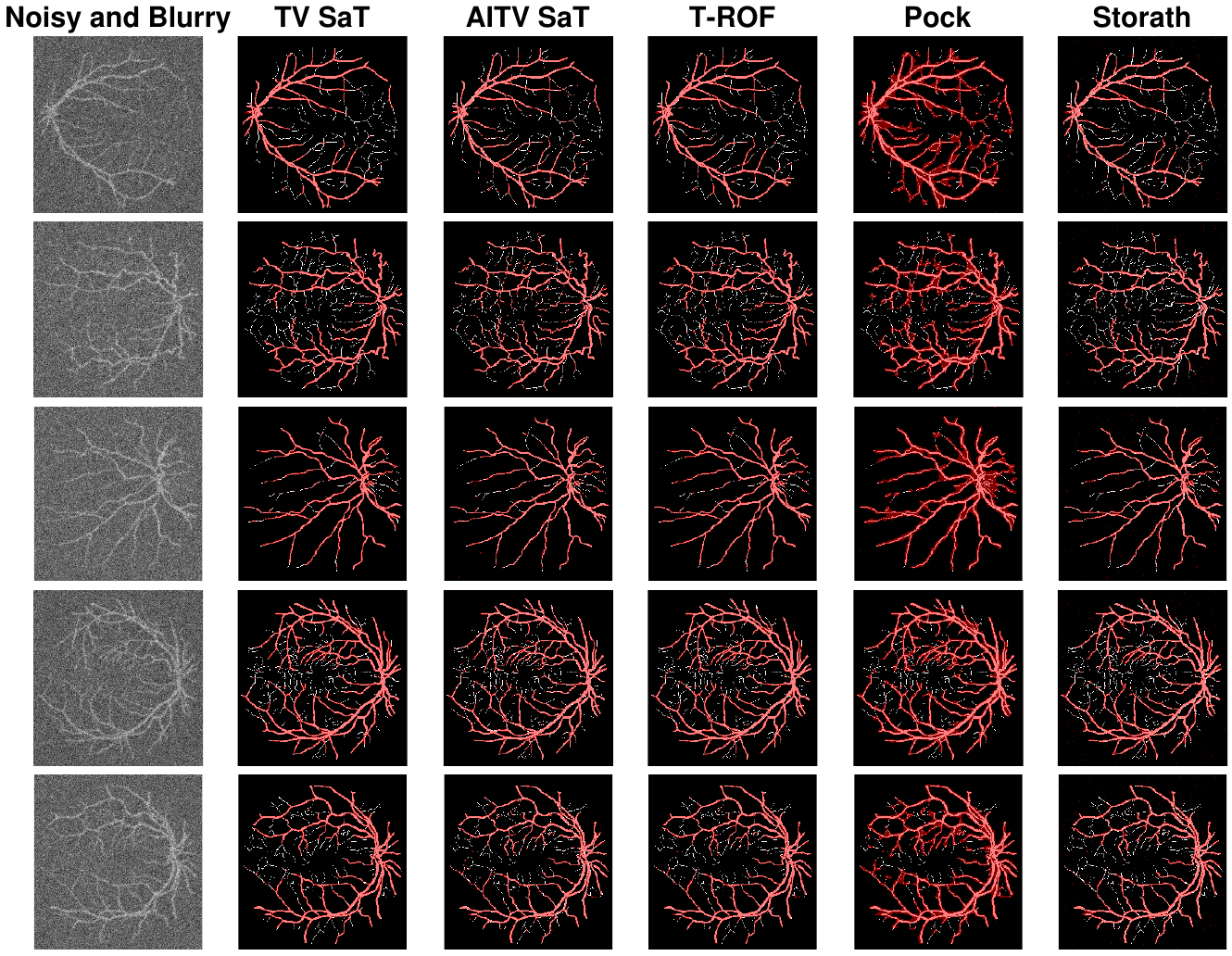}
    \caption{Binary segmentation results of Figure \mbox{\ref{fig:retinex_test}} with peak $P/2$ under Gaussian blur  and  Poisson noise.}
    \label{fig:p_2_blur}
\end{figure}
\begin{table*}[t!!]
    \centering
    \caption{DICE and computational time in seconds of the binary segmentation methods averaged over 20 images in Figure \mbox{\ref{fig:retinex_test}} with standard deviations in parentheses. \textbf{Bold} indicates best result.} \label{tab:new_binary_dice}
    \begin{tabular}{|l|l|c|c|c|c|c|}
    \hline
       & &TV SaT & AITV SaT & T-ROF & Pock & Storath \\ \hline
        \multirow{ 2}{*}{$P/2$ no blur}& DICE & \makecell{0.9464 \\(0.0091)} & \makecell{\textbf{0.9501} \\ (0.0076)} & \makecell{0.9463 \\(0.0073)} & \makecell{0.8466\\ (0.0301)} & \makecell{0.8855\\ (0.0181)}
 \\ \cline{2-7}
        & Time (sec.)  & \makecell{\textbf{4.2401} \\ (0.3618)} & \makecell{5.7342 \\ (0.5251)} & \makecell{4.9206 \\(1.4281)} & \makecell{24.7376	\\ (3.2454)} & \makecell{19.9456\\ (1.8875)}
 \\  \hline
        \multirow{ 2}{*}{$P/5$ no blur}& DICE & \makecell{0.8714 \\ (0.0134)} &	\makecell{\textbf{0.8735} \\ (0.0125)} & \makecell{0.8570 \\ (0.0170)}	& \makecell{0.6504 \\ (0.0910)} & \makecell{0.8277 \\ (0.0191)}
 \\ \cline{2-7}
        &Time (sec.) & \makecell{\textbf{4.7076} \\ (0.6937)}	& \makecell{6.4027 \\ (0.8441)} & \makecell{5.4943 \\ (0.7935)} & \makecell{46.9346 \\ (9.4969)}	& \makecell{21.8734  \\(2.7660)} \\ \hline
        \multirow{ 2}{*}{$P/2 $ with Gaussian Blur}& DICE & \makecell{0.7244 \\ (0.0254)}	& \makecell{\textbf{0.7411} \\ (0.0220)}	& \makecell{0.7322 \\ (0.0251)}	& \makecell{0.5473 \\ (0.0398)} &	\makecell{0.6944 \\ (0.0217)} \\ \cline{2-7}

       &Time (sec.) & \makecell{\textbf{7.4495} \\ (1.0983)}	& \makecell{9.2523 \\ (1.5959)} & \makecell{11.7337 \\ (2.1252)} & \makecell{47.3911 \\ (10.9191)}	& \makecell{19.9444 \\ (2.4142)}
 \\ \hline
    \end{tabular}
\end{table*}

\subsection{Grayscale, Multiphase Segmentation} \label{sec:multi}

\begin{figure}[t!!]
    \centering
    \includegraphics[scale=1.3]{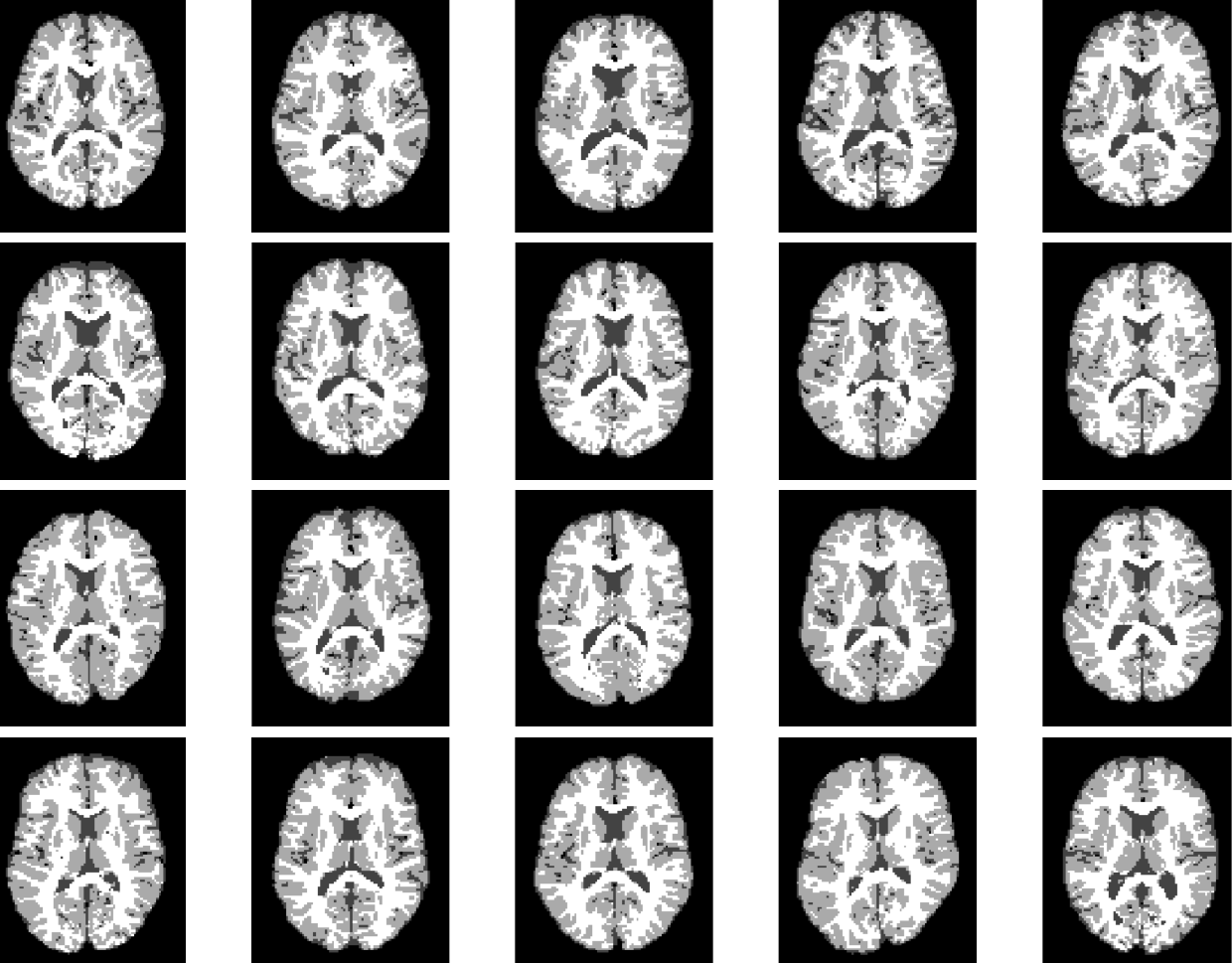}
    \caption{The entire BrainWeb dataset \mbox{\cite{aubert2006twenty}} for grayscale, multiphase segmentation. Each image is of size $104 \times 87$. The pixel values are 10 (background), 48 (cerebrospinal fluid), 106 (grey matter), and 154 (white matter).}
    \label{fig:brain}
\end{figure}

\begin{figure}[t]
    \centering
    \includegraphics[scale=1.2]{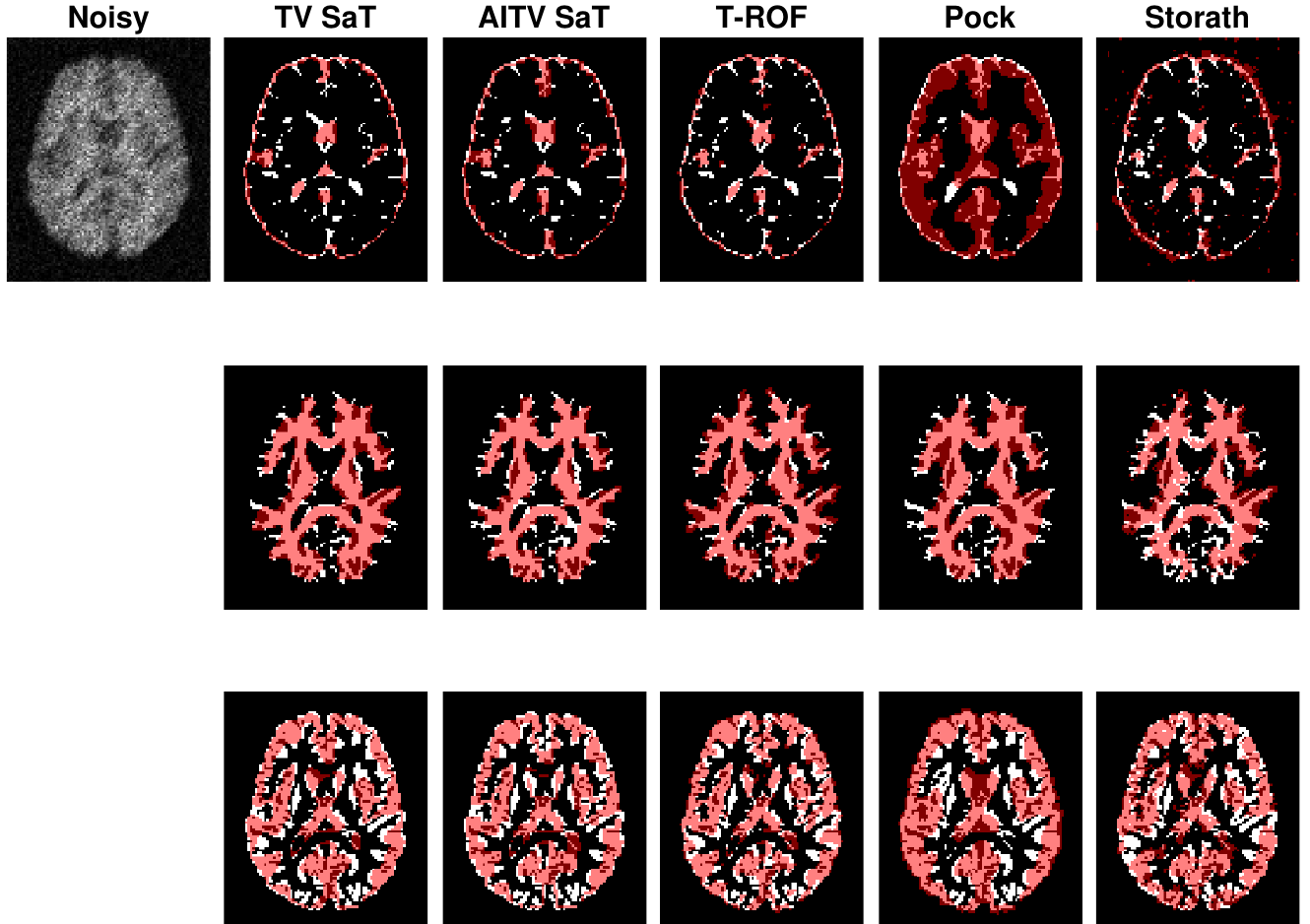}
    \caption{Segmentation result of the first image of Figure \mbox{\ref{fig:brain}} with peak $P/8$ under Poisson noise with no blur. From top to bottom are segmentation results for CSF, GM, and WM. }
    \label{fig:multiphase}
\end{figure}
\begin{table*}[t!!]
    \centering
    \caption{DICE and computational time in seconds of the multiphase segmentation methods averaged over 20 images in Figure \mbox{\ref{fig:brain}} with standard deviations in parentheses. \textbf{Bold} indicates best result.} \label{tab:new_brain_dice}
    \begin{tabular}{|l|l|c|c|c|c|c|}
    \hline
       & &TV SaT & AITV SaT & T-ROF & Pock & Storath \\ \hline
        \multirow{ 6}{*}{$P/8$ no blur}& CSF DICE & \makecell{0.8208\\(0.0270)}	& \makecell{\textbf{0.8396} \\ (0.0262)}	& \makecell{0.7398 \\ (0.1316)}	& \makecell{0.4572 \\ (0.1474)} &	\makecell{0.8041 \\ (0.0244)}

 \\ \cline{2-7}
 & GM DICE  & \makecell{0.8097 \\ (0.0258) }	& \makecell{\textbf{0.8477} \\ (0.0157)}	& \makecell{0.7904 \\ (0.0422)}	& \makecell{0.7507 \\ (0.0719)}	& \makecell{0.7900 \\ (0.0344)}

 \\ \cline{2-7}
  & WM DICE  & \makecell{0.8449 \\ (0.0125)} &	\makecell{\textbf{0.8694} \\ (0.0101)} &	\makecell{0.8221 \\ (0.0132)} &	\makecell{0.8459 \\ (0.0283)} &	\makecell{0.8138 \\ (0.0196)}

 \\ \cline{2-7}
        & Time (sec.)  & \makecell{0.2607 \\ (0.0625)} &	\makecell{0.2863 \\ (0.0470)} &	\makecell{\textbf{0.2202} \\ (0.0324)} & 	\makecell{2.6139 \\ (0.6690)} & 	\makecell{0.3383 \\ (0.0948)}

 \\  \hline
        \multirow{ 6}{*}{$P/8$ with Motion Blur}& CSF DICE & \makecell{0.6196 \\ (0.0385) } &	\makecell{\textbf{0.6260} \\ (0.0483)} &	\makecell{0.6174 \\ (0.0460)} &  \makecell{0.3772 \\ (0.0561) } &	\makecell{0.4964 \\ (0.0468)}

 \\ \cline{2-7}
         &GM DICE & \makecell{0.6809 \\ (0.0304) } &	\makecell{\textbf{0.7138} \\ (0.0262) } &	\makecell{0.6528 \\ (0.0358)} &	\makecell{0.6345 \\ (0.0590)}&	\makecell{0.6544\\ (0.0399)}
 \\\cline{2-7}
          &WM DICE & \makecell{0.7757 \\ (0.0127)}	&\makecell{\textbf{0.7935} \\ (0.0110)}	& \makecell{0.7686 \\ (0.0164)} & \makecell{0.7529 \\ (0.0185) }	& \makecell{0.7382 \\ (0.0140) }

 \\\cline{2-7}
        &Time (sec.) & \makecell{\textbf{0.2494} \\ (0.0443) } &	\makecell{0.2854 \\ (0.0279)}	& \makecell{0.3647 \\ (0.0693)} &	\makecell{2.5782\\ (0.4683)} & \makecell{0.3052 \\ (0.1399)}
\\ \hline
    \end{tabular}
\end{table*}

We examine  the multiphase segmentation on the entire BrainWeb dataset \mbox{\cite{aubert2006twenty}} that consists of 20 grayscale images as shown in Figure \mbox{\ref{fig:brain}}.
Each image is of size $104 \times 87$ and  has  four regions to segment: background, cerebrospinal fluid (CSF), grey matter (GM), and white matter (WM). The pixel values are 10 (background), 48 (CSF), 106 (GM), and 154 (WM). The maximum intensity $P = 154$.  We consider two cases: (1) $P/2$ no blur and (2) $P/2$ with motion blur specified by \texttt{fspecial('motion', 5, 225)}. For the SaT methods, we have $\mu = 1.0$, $\alpha = 0.6, 0.7$, and $\lambda = 4.0, 5.0$ for case (1) and case (2), respectively. 

Across all 20 images of the BrainWeb dataset, Table \mbox{\ref{tab:new_brain_dice}} reports the average DICE indices for CSF, GM, and WM and average computational times in seconds of the segmentation methods. For both cases (1) and (2), AITV SaT attains the highest average DICE indices for segmenting CSF, GM, and WM. AITV SaT is comparable to  TV SaT and T-ROF in terms of computational time.

Figure \mbox{\ref{fig:multiphase}} shows the segmentation results of the first image in Figure \mbox{\ref{fig:brain}} for case (1). When segmenting CSF, the methods (TV SaT, AITV SaT, and Storath) yield similar visual results, while Pock fails to segment roughly half of the region. In addition, AITV SaT segments the most GM region with the least amount of noise artifacts than the other methods. Lastly, for WM segmentation, AITV SaT avoids the ``holes" or ``gaps" and segments fewer regions 
outside of the ground truth, thus outperforming TV SaT and Storath. For the three regions, T-ROF has the most noise artifacts in its segmentation results.

\subsection{Color Segmentation}\label{sec:color}
We perform color image segmentation on 10 images shown in Figure \mbox{\ref{fig:color_test}}, which are selected from the PASCAL VOC 2010 dataset \mbox{\cite{everingham2009pascal}}. Each image has six different color regions. Figure 5(A) is of size $307 \times 461$; Figure 5(B) is of $500 \times 367$; Figures 5(C) and 5(H)-(I) are of $500 \times 375$; and Figures 5(D)-(G) and 5(J) are of $375 \times 500$. Before adding Poisson noise to each channel of each image, we set the peak value $P = 10$. We choose the parameters of the SLaT methods to be $\lambda = 1.5$, $\mu = 0.05$, and $\alpha = 0.6$. 

Figures \mbox{\ref{fig:color_result1}-\ref{fig:color_result2}} present the piecewise-constant approximations via \mbox{\eqref{eq:pc_constant_approx}},  showing similar segmentation results obtained by TV SLaT, AITV SLaT, and Storath. Quantitatively in Table \mbox{\ref{tab:color_result}}, AITV SLaT has better PSNRs than TV SLaT and Pock for all the images and outperforms Storath for seven. Overall, the proposed 
AITV SLaT has the highest PSNR on average over 10 images with the lowest standard deviation and 
comparable speed as Storath.

\begin{figure}[t!]
    \centering\includegraphics[scale=1.]{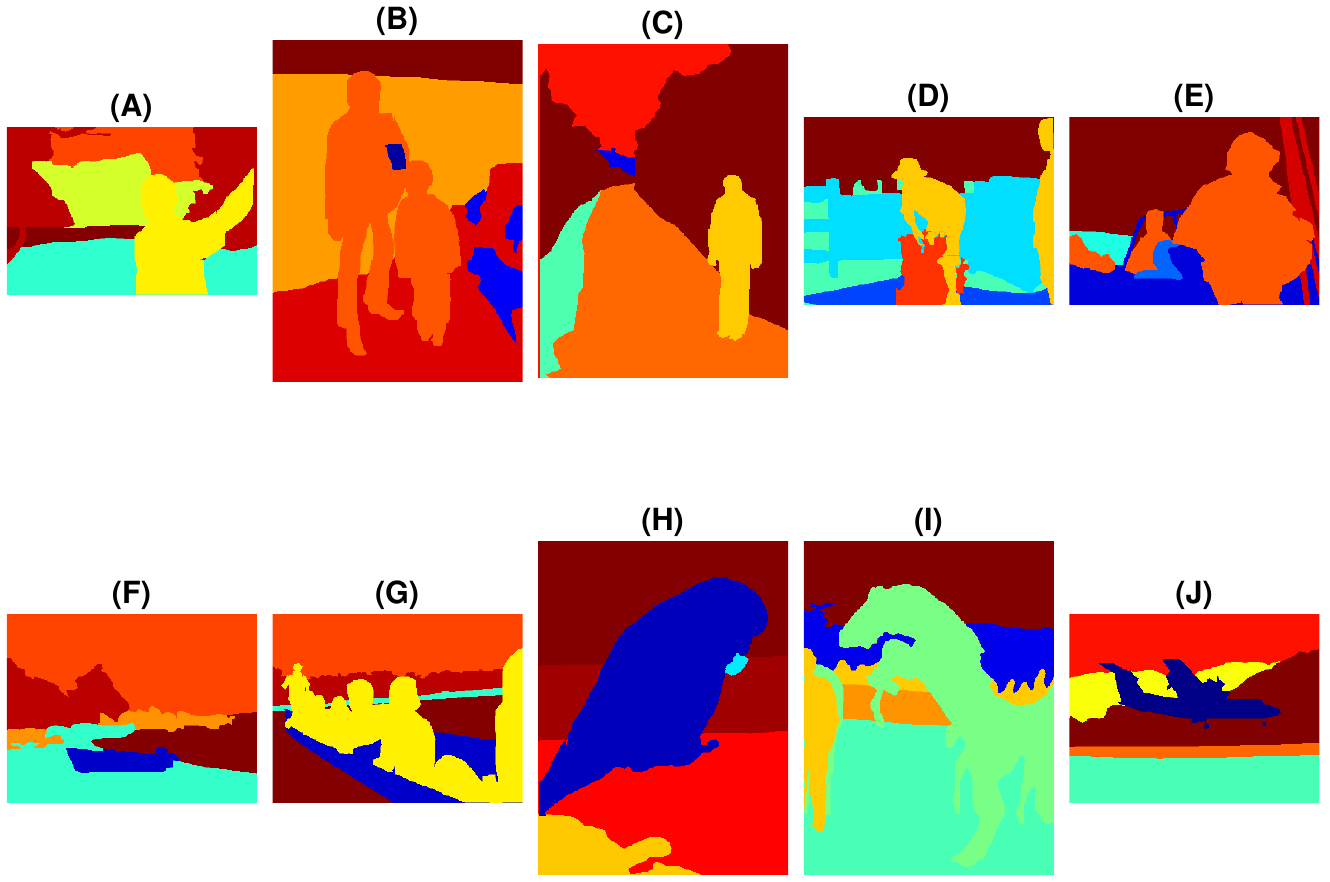}
    \caption{Test images from the PASCAL VOC 2010 dataset \mbox{\cite{everingham2009pascal}} for color, multiphase segmentation. Each image has 6 regions. The image sizes are (A) $307 \times 461$, (B) $500 \times 367$, (C) $500 \times 375$, (D)-(G) $375 \times 500$, (H)-(I) $500 \times 375$, and (J) $375 \times 500$.} 
    \label{fig:color_test}
\end{figure}

\begin{figure}[t!]
    \centering\includegraphics[scale=0.975]{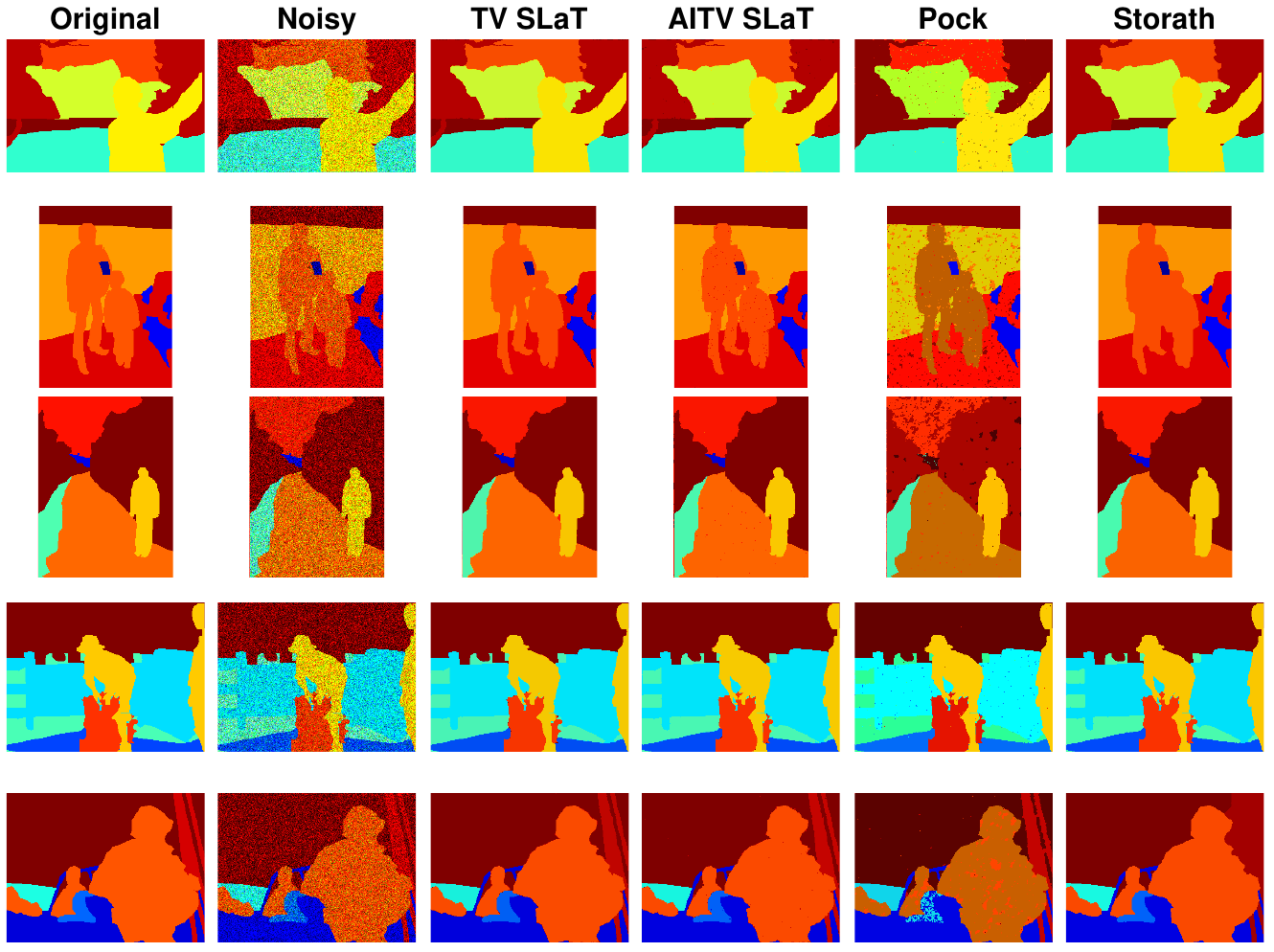}
    \caption{Color image segmentation results 
    of Figures \mbox{\ref{fig:color_test}}(A)-(E).} 
    \label{fig:color_result1}
\end{figure}
\begin{figure}[th!]
    \centering\includegraphics[scale=0.975]{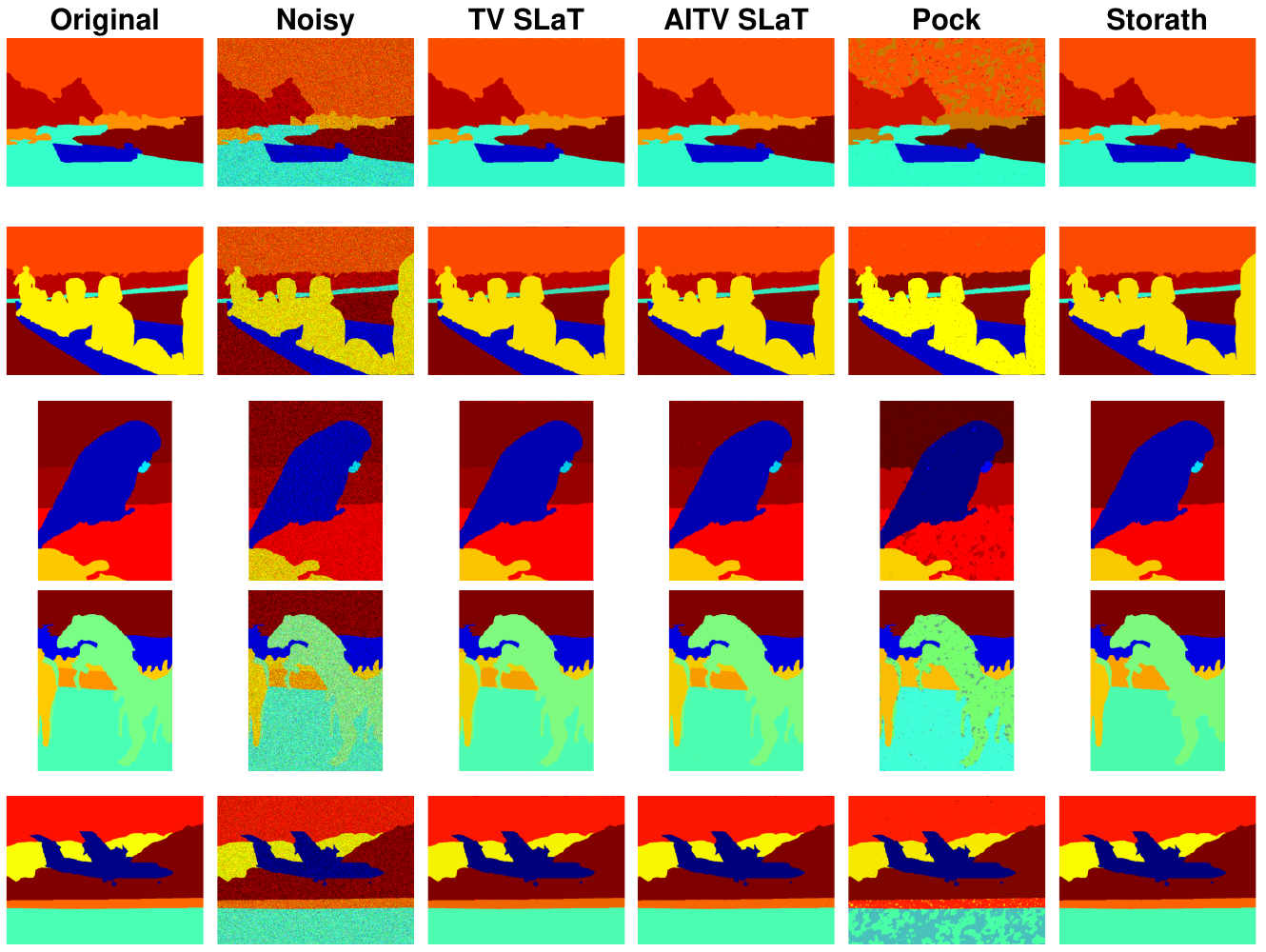}
    \caption{Color image segmentation results 
    of Figures \mbox{\ref{fig:color_test}}(F)-(J). } 
    \label{fig:color_result2}
\end{figure}

\begin{table}[t!!!]
 \caption{PSNR and computation time in seconds for color image segmentation of the images in Figure \mbox{\ref{fig:color_test}}. \textbf{Bold} indicates best result.}\label{tab:color_result}
    \centering
    \begin{tabular}{|l|l|c|c|c|c|}
    \hline
        &&TV SLaT & AITV SLaT & Pock & Storath \\ \hline
        \multirow{10}{*}{PSNR}&Figure \ref{fig:color_test}(A) & 29.0337	&\textbf{30.4619} &19.2306	& 28.7141
\\ \cline{2-6}
        &Figure \ref{fig:color_test}(B) & \textbf{31.8673}	& 31.7055	&17.3512	&30.9703
 \\ \cline{2-6}
        &Figure \ref{fig:color_test}(C) & 30.9605	& \textbf{33.4217}	&18.0389	& 32.2217
 \\ \cline{2-6}
        &Figure \ref{fig:color_test}(D) & 29.5265	&32.5505 &	21.4873&	\textbf{34.7881}\\ \cline{2-6}
        &Figure \ref{fig:color_test}(E) & 29.8903	&\textbf{31.0656}	&19.7646 &	28.3456\\ \cline{2-6}
        &Figure \ref{fig:color_test}(F) & 33.2308	& \textbf{34.7619}	& 17.9788	& 34.4106
\\ \cline{2-6}
        &Figure \ref{fig:color_test}(G) & 28.1136 &	30.6237 &	22.5048	& \textbf{31.2439}
\\ \cline{2-6}
        &Figure \ref{fig:color_test}(H) & 33.2682	& \textbf{33.4250}	& 18.9390	& 31.9377
\\ \cline{2-6}
        &Figure \ref{fig:color_test}(I) &30.0659	&\textbf{31.7937}&	20.0856&	29.3905
\\ \cline{2-6}
        &Figure \ref{fig:color_test}(J) & 31.4164	&34.0610&20.3185	&\textbf{34.3599}
        \\ \cline{2-6}
        &\makecell{Avg.\\(Std.)} & \makecell{30.7373 \\ (1.7266)}	&\makecell{\textbf{32.3870} \\ (1.4905)}	&\makecell{19.5700\\(1.6147)} &\makecell{31.6383\\(2.3686)}

 \\ \hline \hline
        \multirow{ 10}{*}{Time (sec.)}&Figure \ref{fig:color_test}(A) & 7.1992	&8.2455	&148.9232	& \textbf{4.9410}

\\ \cline{2-6}
        &Figure \ref{fig:color_test}(B) & 12.2238	&11.8890	&321.0637	&\textbf{9.9533}

 \\ \cline{2-6}
        &Figure \ref{fig:color_test}(C)&\textbf{5.6775}	&7.4494	&384.6998	&10.1785

 \\ \cline{2-6}
        & Figure \ref{fig:color_test}(D) & \textbf{4.9511} &	7.2061	&128.4236	&7.0026
\\ \cline{2-6}
         & Figure \ref{fig:color_test}(E)& \textbf{6.2380}	&7.0719	&275.0501 &	7.9808

\\ \cline{2-6} &Figure \ref{fig:color_test}(F) &\textbf{6.0509}	&7.2392&	398.1339&	10.0982

\\ \cline{2-6} & Figure \ref{fig:color_test}(G) & \textbf{7.2821}	&7.6919 &	165.2142 &	7.9315

\\ \cline{2-6}
 &Figure \ref{fig:color_test}(H) & \textbf{6.0269}	&9.7018&	385.3311	& 7.9389

\\ \cline{2-6} &Figure \ref{fig:color_test}(I) & \textbf{7.0375}	&8.0875 &	250.3698	&9.7501

\\ \cline{2-6} & Figure \ref{fig:color_test}(J) &\textbf{6.1873}	&10.6472 &	334.8662	&7.2176
\\\cline{2-6}
        &\makecell{Avg.\\(Std.)} & \makecell{\textbf{6.8874}\\(2.0088)} & \makecell{8.5230\\(1.6610)} &	\makecell{279.2076\\ (102.7063)} &	\makecell{8.2993\\ (1.7031)} \\ \hline
    \end{tabular}
\end{table}
\subsection{Parameter Analysis}
The proposed smoothing model \mbox{\eqref{eq:ms_poisson_aitv}} involves the following parameters:
\begin{itemize}
    \item \textit{The fidelity parameter $\lambda$} weighs how close the approximation $Au^*$ is to the original image $f$. For a larger amount of noise, the value of $\lambda$ should be chosen smaller. 
    \item \textit{The smoothing parameter $\mu$} determines how smooth the solution $u^*$ should be. A larger value of $\mu$ may improve denoising, but at a cost of smearing out the edges between adjacent regions, which may be segmented together if they have similar colors.
    \item \textit{The sparsity parameter $\alpha \in [0,1]$} determines how sparse the gradient vector at each pixel should be. More specifically, the closer the value $\alpha$ is to 1, the more $\|\nabla u\|_1 - \alpha \|\nabla u\|_{2,1}$ resembles $\|\nabla u\|_0$. 
\end{itemize}

We perform sensitivity analysis on these parameters to understand how they affect the segmentation quality of AITV SaT/SLaT. We consider two types of tests in the case of 
 $P/8$ with motion blur in Figure \mbox{\ref{fig:brain}}. In the first case (Figure \mbox{\ref{fig:lambda_analysis}}), we fix $\mu = 1.0$  and vary $\alpha, \lambda$. In the second case (Figure \mbox{\ref{fig:mu_analysis}}),  we fix $\lambda = 5.0$  and vary $\alpha, \mu$.  Figure \mbox{\ref{fig:lambda_analysis}} reveals a concave relationship of the DICE index of each region  with respect to the parameter $\lambda$, which implies there exists the optimal choice of $\lambda.$
Additionally, when $\lambda$ is  small,  a large value for $\alpha$ can improve the DICE indices. According to Figure \mbox{\ref{fig:mu_analysis}}, the DICE indices of the GM and WM regions decrease with respect to $\mu$, while the DICE index of the CSF region is approximately constant. For $\alpha = 0.8$, the DICE indices of the GM and WM regions are the largest when $\mu \geq 1$, but  
the large $\alpha$ is not optimal for CSF. Hence, an intermediate value of $\alpha$, such as $0.6$, is preferable to attain satisfactory segmentation quality for all three regions.

Lastly, in Figure \mbox{\ref{fig:color_analysis}}, we conduct sensitive analysis for the  case of $P=10$ of Figure \mbox{\ref{fig:color_test}}. We fix $\mu = 0.05$, while varying $\lambda, \alpha$ in Figure \mbox{\ref{fig:color_analysis}}(A), which indicates that the optimal value for $\alpha$ is in the range of $0.5 \leq \alpha \leq 0.7$. Then we fix $\lambda = 1.5$ to examine $\mu$ and $\alpha$ in Figure \mbox{\ref{fig:color_analysis}}(B).  For $0.2 \leq \alpha \leq 0.7$, PSNR decreases as $\mu$ increases. Again, $\alpha = 0.6$ generally yields the best PSNR.

\begin{figure}
    \centering
    \includegraphics[scale=0.425]{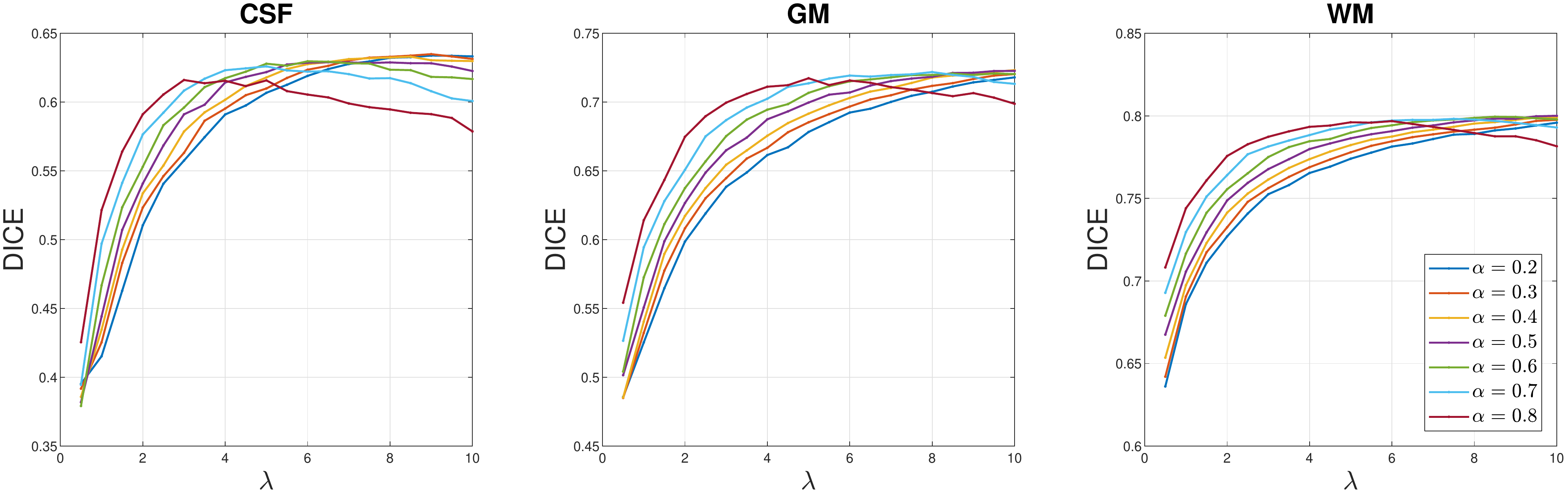}
    \caption{Sensitivity analysis on $\lambda$ for the $P/8$ with motion blur case of Figure \mbox{\ref{fig:brain}}. The parameter $\mu = 1.0$ is fixed.  DICE indices averaged over 10 images for each brain region are plotted with respect to  $\lambda$. }
    \label{fig:lambda_analysis}
\end{figure}
\begin{figure}
    \centering
    \includegraphics[scale=0.425]{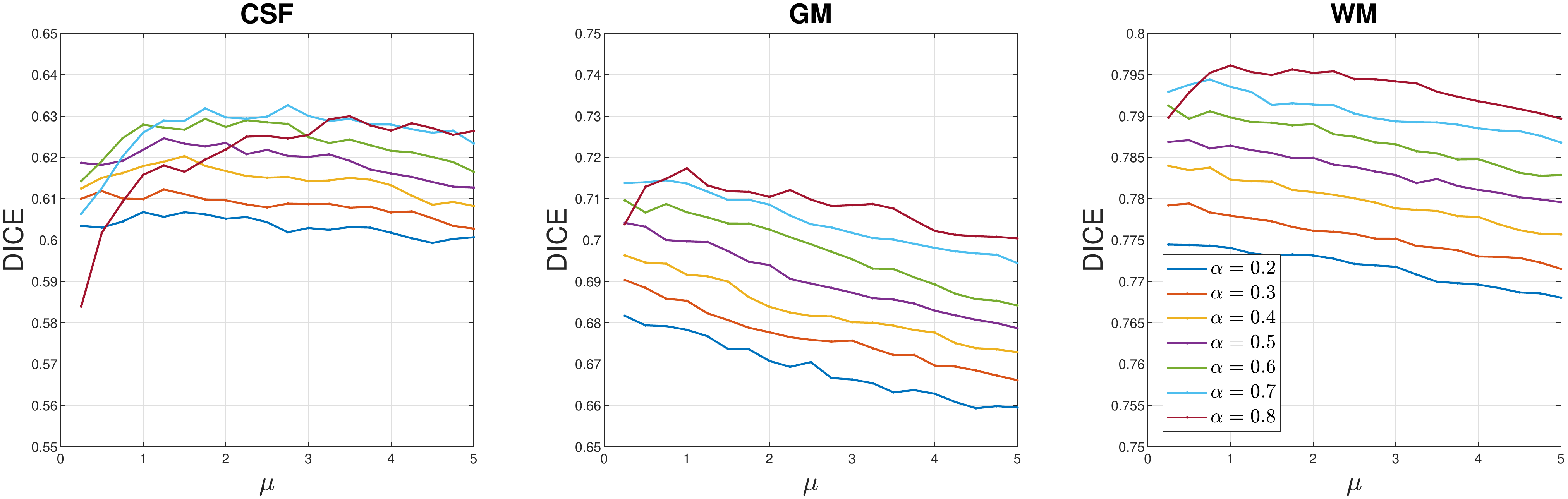}
    \caption{Sensitivity analysis on $\mu$ for the $P/8$ with motion blur case of Figure \mbox{\ref{fig:brain}}. The parameter $\lambda = 5.0$ is fixed.  DICE indices averaged over 10 images for each brain region are plotted with respect to  $\mu$. }
    \label{fig:mu_analysis}
\end{figure}
\begin{figure}
    \centering
    \includegraphics[scale=0.425]{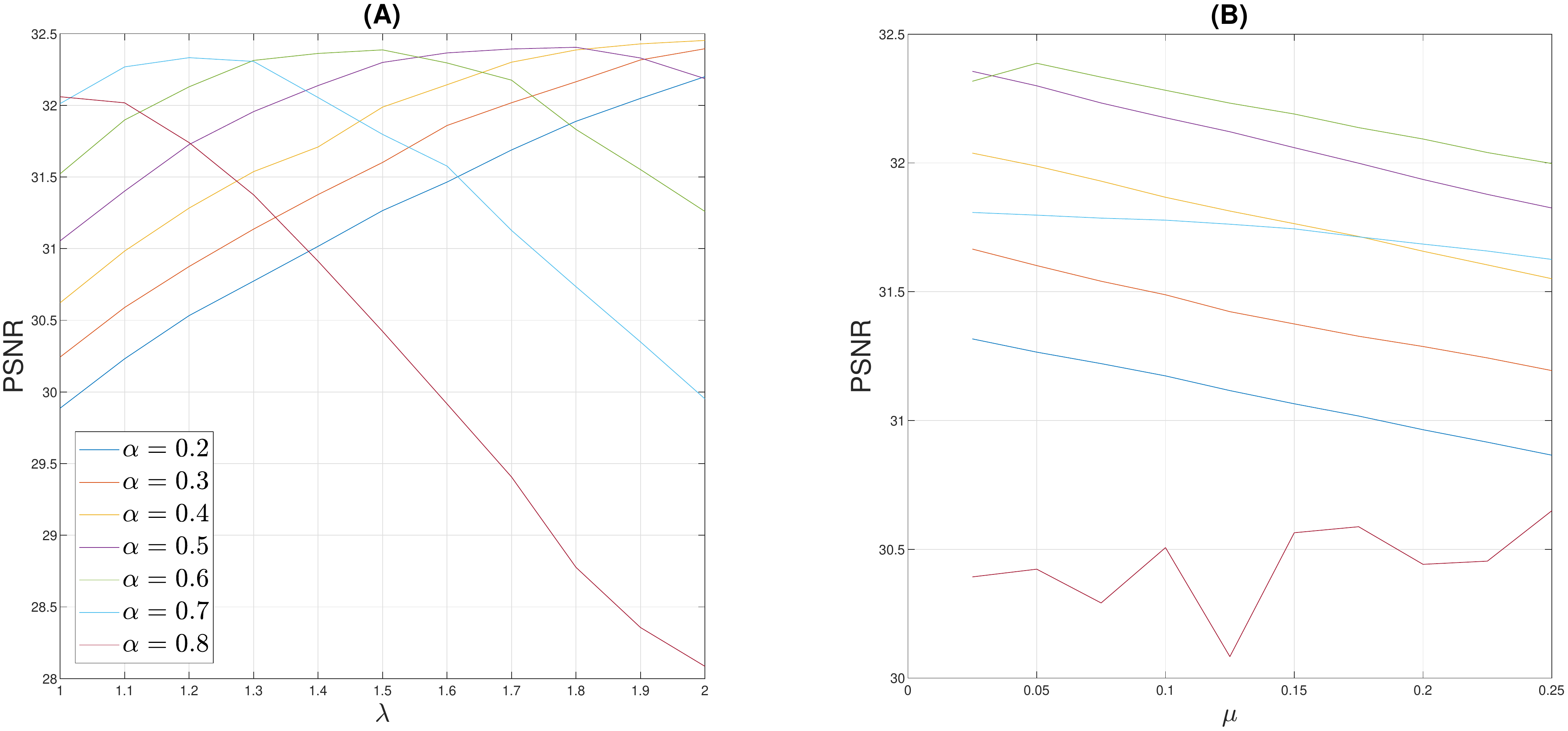}
    \caption{Sensitivity analysis of parameters for $P=10$ case of Figure \mbox{\ref{fig:color_test}}. (A) is the sensitivity analysis on $\lambda$ when $\mu = 0.05$ fixed; (B) is the sensitivity analysis on $\mu$ when $\lambda = 1.5$ fixed. Average PSNR is plotted.}
    \label{fig:color_analysis}
\end{figure}
\section{Conclusion and future work} \label{sec:conclude}
In this paper, we developed the AITV Poisson SaT/SLaT framework for image segmentation. In particular, we proposed a simplified Mumford-Shah model with the AITV regularization and Poisson fidelity for the smoothing step. The model was proven to have a global minimizer. Our numerical algorithm incorporated a specific splitting scheme for ADMM and the $\ell_1 - \alpha \ell_2$ proximal operator for solving a subproblem. Convergence analysis  established that the sequence generated by
ADMM  has a convergent subsequence to a stationary point of the nonconvex model. In our numerical experiments, the AITV Poisson SaT/SLaT yielded high-quality segmentation results within seconds for various grayscale and color images corrupted with Poisson noise and/or blur. For future directions, we are interested in other nonconvex regularization, such as $\ell_1/\ell_2$ on the gradient \cite{wang2022minimizing, wang2021limited, wu2022efficient},  $\ell_p, 0< p<1$, on the gradient \mbox{\cite{hintermuller2013nonconvex, li2020tv, wu2021two}}, and transformed total variation \cite{huo2022stable}, as alternatives to AITV. On the other hand, we can develop AITV variants of weighted TV \mbox{\cite{li2022novel}} or adaptive TV \mbox{\cite{wu2021adaptive, zhang2022edge}}. Moreover, we plan to determine how to make the sparsity parameter $\alpha$ in AITV adaptable to each image. In future work, we will adapt other segmentation algorithms \mbox{\cite{cai2019linkage, jung2017piecewise, jung2014variational,  li2010multiphase, li2016multiphase, pang2023adaptive, yang2022anisotropic}} designed for Gaussian noise or impulsive noise to Poisson noise.

\section*{Conflict of Interest Statement}
The authors declare that the research was conducted in the absence of any commercial or financial relationships that could be construed as a potential conflict of interest.

\section*{Author Contributions}

KB performed the experiments and analysis and drafted the manuscript. All authors
contributed to the design, evaluation, discussions and production of the manuscript. 

\section*{Funding}

The work was partially supported by NSF grants DMS-1846690, DMS-1854434, DMS-1952644, DMS-2151235, and a Qualcomm Faculty Award. 

\section*{Data Availability Statement}
The images in Figure \ref{fig:retinex_test} are provided from the DRIVE dataset \cite{staal2004ridge} at \url{https://drive.grand-challenge.org/DRIVE/}. The images in Figure \ref{fig:brain} are extracted from BrainWeb \cite{aubert2006twenty} via the Python package ``brainweb" provided at \url{https://github.com/casperdcl/brainweb}. The images in Figure \mbox{\ref{fig:color_test}} are selected from the PASCAL VOC 2010 dataset \mbox{\cite{everingham2009pascal}}. Code for AITV Poisson SaT/SLaT is available at \url{https://github.com/kbui1993/Official_Poisson_AITV_SaT_SLaT}.

\bibliographystyle{siam} 
\bibliography{frontiers}




\end{document}